\newtheorem{lemma}{Lemma}
\newcommand{\assign}{:=}
\newtheorem{theorem}{Theorem}
\newcommand{\Tool}{{GaDei}\xspace}
\newcommand\jewel{{ Joule }}
\newcommand\welltok{{ Watt }}
\titlespacing*{\section} {0pt}{1.2ex}{1.0ex}
\titlespacing*{\subsection} {0pt}{0.8ex}{0.7ex}
\titlespacing*{\subsubsection}{0pt}{0.4ex}{0.3ex}
\begin{document}

\setlength{\belowdisplayskip}{0pt} \setlength{\belowdisplayshortskip}{0pt}
\setlength{\abovedisplayskip}{0pt} \setlength{\abovedisplayshortskip}{0pt}

\title{
\Tool: On Scale-up Training As A Service For Deep Learning}
\author{
{\rm Wei Zhang$^{1}$, Minwei Feng$^{2}$, Yunhui Zheng$^{1}$, Yufei Ren$^{1}$, Yandong Wang$^{1}$}\\
{\rm Ji Liu$^{3}$, Peng Liu$^{1}$, Bing Xiang$^{2}$, Li Zhang$^{1}$, Bowen Zhou$^{2}$, Fei Wang$^{4}$}\\
IBM T.J.Watson Research$^{1}$, IBM Watson$^{2}$, University of Rochester$^{3}$, Cornell University$^{4}$
}
\maketitle

\thispagestyle{empty}

\begin{abstract}
Deep learning (DL) training-as-a-service (TaaS) is an important emerging industrial workload. TaaS must satisfy a wide range of customers who have no experience and/or resources to tune DL hyper-parameters (e.g., mini-batch size and learning rate), and meticulous tuning for each user's dataset is prohibitively expensive. Therefore, TaaS hyper-parameters must be fixed with values that are applicable to all users. Unfortunately, few research papers have studied how to design a system for TaaS workloads. By evaluating the IBM Watson Natural Language Classfier (NLC) workloads,  the most popular IBM cognitive service used by thousands of enterprise-level clients globally, we provide empirical evidence that only the conservative hyper-parameter setup (e.g., small mini-batch size) can guarantee acceptable model accuracy for a wide range of customers.  
Unfortunately, smaller mini-batch size requires higher communication bandwidth in a parameter-server based DL training system. In this paper, we characterize the exceedingly high communication bandwidth requirement of TaaS using representative industrial deep learning workloads. 
We then present \Tool, a highly optimized shared-memory based scale-up parameter server design. 
We evaluate \Tool using both commercial benchmarks and public benchmarks and demonstrate that \Tool significantly outperforms the state-of-the-art parameter-server based implementation while maintaining the required accuracy. \Tool achieves near-best-possible runtime performance, constrained only by the hardware limitation. Furthermore, to the best of our knowledge, \Tool is the only scale-up DL system that provides fault-tolerance. 

\end{abstract}
\section{Introduction}
\label{sec:intro}


When deployed on the cloud, deep learning (DL) training-as-a-service (TaaS) faces unique challenges: different customers upload their own training data and expect a model of high prediction accuracy returned shortly. Unlike academic researchers, customers have neither expertise nor resources to conduct time-consuming hyper-parameter (e.g., learning rate, mini-batch size) tuning. Hyper-parameter tuning itself is an unsolved and challenging research topic and the tuning process is usually prohibitively expensive \cite{hyper_param_search, hyper_param_bayes}.
The goal of TaaS is not to provide each user a dedicated hyper-parameter setup and the companioning model, but to provide all users an unified DL model and the common hyper-parameter setup which still delivers the cutting-edge model to customers.
  As a result, \textbf{industrial practitioners adopt conservative hyper-parameter setup} (e.g., small mini-batch size and small learning rate). On the other hand, a training system that can support such conservative setup can easily support less restrictive setup, which makes hyper-parameter tuning turn-around time much shorter. 
 
In a parameter server (PS) based DL system, such a conservative setup implies high-frequency communication with the PS. We provide a detailed analysis of the communication 
bandwidth requirement for real-world commercial workloads in Section~\ref{sec:nlc} and Section~\ref{sec:modeling}. The analysis shows that the bandwidth requirement is beyond the capacity of advanced communication hardware (e.g., RDMA). Furthermore, with faster GPU devices and more efficient software library support, such as cuDNN~\cite{cudnn}, the communication cost of exchanging models between parameter servers and learners start to dominate the training cost, which renders scale-out training unattractive. 

In addition, the staleness 
issue inherent in distributed 
deep learning systems makes scaling-out deep learning less cost-effective when 
training over a dozen GPU learners~\cite{liu-asgd-nips-2015,zhang-ijcai-2016}. As a result, one of the largest public-known commercial deep 
learning training system~\cite{deep-speech}  uses 8 GPUs. \textbf{Scale-up} deep learning training becomes a favored approach in the TaaS scenario. 


Although powerful scale-up servers, such as NVIDIA DevBox, provides hardware platforms to improve the training 
performance, our evaluation (detailed in Section~\ref{sec:eval-compare}) reveals that 
the state-of-the-art scale-up software solutions  are  unable to make the best use 
of underlying hardware. Two representative open-source scale-up deep learning 
frameworks are mpiT\cite{mpiT}, an MPI-based ASGD\footnote{Asynchronous Stochastic Gradient Descent (ASGD) is defined in 
Section~\ref{sec:defs}} framework; and DataParallelTable (DPT), a 
nccl-based (nccl\cite{nccl} is a high-performance in-node GPU collectives implementation) 
SSGD\footnote{Synchronous Stochastic Gradient Descent (SSGD) is defined in Section~\ref{sec:defs}} framework. The former 
solution incurs unnecessary memory copies between GPUs and MPI runtime and 
is unable to implement the lock-free update as proposed in HogWild!\cite{hogwild}. 
The latter solution incurs synchronization barriers by forcing all GPUs to operate in lock-steps, which leads to the straggler problem. 
Furthermore, none of these two solutions provide a fault-tolerance 
mechanism, which makes them undesirable for the commercial adoption.

To solve these issues, we introduce \Tool, a high-performance 
scale-up parameter server design, aiming to efficiently coordinate the model 
synchronization among GPUs located on the same machine. 
\Tool strives to pipeline the entire model 
synchronization, overlapping all the model training and data movement 
phases to eliminate GPU stalls. Specifically, \Tool implements three 
system optimizations: (1) Communication via minimum memory copies (2) Lock-free Hogwild! style weights update rule 
(3) On-device double buffering, along with GPU multi-streaming, 
to pipeline model trainings and parameter movements. \Tool enables 
training with small mini-batch size, which mitigates the staleness 
issue to guarantee model 
convergence.  By evaluating \Tool on a diverse set of real-world deep 
learning workloads, we demonstrate that \Tool is able to efficiently 
exploit the bandwidth offered by the commodity scale-up servers, 
providing faster convergence with significantly higher training 
speedups compared to existing open-source solutions, such as mpiT and DPT.

Overall, this work has made the following contributions:
\begin{enumerate}
\item We have identified the key challenges in designing a training-as-a-service system: Hyper-parameters must be set conservatively (e.g.,  small mini-batch size and high model communication frequency) to guarantee model accuracy. 

\item We have designed and implemented \Tool, a highly-optimized parameter 
server system, to deliver scale-up and resilient 
training for TaaS workloads on multi-GPU servers. \Tool enables efficient multi-learner training for arbitrary type of neural networks (e.g., CNN, RNN). The design principle of \Tool is independent from the underlying gradient-calculation building-blocks and can complement any open-source DL frameworks (e.g., Torch\cite{torch}, Caffe\cite{caffe}, and TensorFlow\cite{tensorflow}).
\item We have proved that \Tool's system design guarantees both model convergence and deadlock-free. To the best of our knowledge, \Tool is the only scale-up parameter server design that provides both fault-tolerance and deadlock-free guarantee. 
We have systematically evaluated \Tool's performance by 
using 6 deep learning workloads with 3 state-of-the-art deep learning models. Evaluation results demonstrate \Tool often outperforms state-of-the-art solutions by an order of magnitude.

\end{enumerate}



\section{Background and Motivation}
In this section, we introduce the background and define the terminologies (in bold font) used in this paper. 
Then we describe the characteristics of TaaS workloads. Finally, we theoretically justify why our design choice guarantees the acceptable model accuracy. 

\subsection{Terminology Definition}

\label{sec:defs}
In essence, deep learning  solves the following generic optimization problem 
\[ \min_\theta \quad F (\theta) \assign \frac{1}{N} \sum_{n = 1}^N f_n(\theta), \]\\ 
where $\theta$ is the \textbf{parameter} (or \textbf{weights}) vector we are seeking, $N$ is the number of samples, and $f_n(\theta)$ is the loss function for the $n^{\text{th}}$ sample. $f(\theta)$ is typically in a form of a multi-layered neural network. 
Stochastic Gradient Descent(SGD) is the de facto algorithm to solve the deep learning optimization problem.  SGD iterates over each training sample and applies Equation~\ref{eqn:gd} to update weights. In Equation~\ref{eqn:gd}, $i$ is the iteration number, $k$ is the k-th parameter, $\nabla$ is the differential operator, and $\alpha$ is the \textbf{learning rate}. Using a large learning rate may converge faster but it may also overshoot so that it does not converge at all, thus using a smaller learning rate is a safer choice in the production run.  SGD passes through the entire training dataset several times until the model converges. Each pass is called an \textbf{epoch} (denoted as $E$). 
To improve computation efficiency, one can group a number of 
samples (i.e., a \textbf{mini-batch}, the size of one mini-batch is denoted as $\mu$) and 
apply Equation~\ref{eqn:gd} to update weights $\theta^{(k)}(i)$ for the $i+1$-th mini-batch.
\begin{equation}
  \theta^{(k)}(i+1) = \theta^{(k)}(i) - \alpha\nabla \theta^{(k)}(i) \label{eqn:gd}
\end{equation}


\begin{figure}[!t]
\centering
\begin{minipage}{.56\linewidth}
  \centering
  \includegraphics[width=1\linewidth]{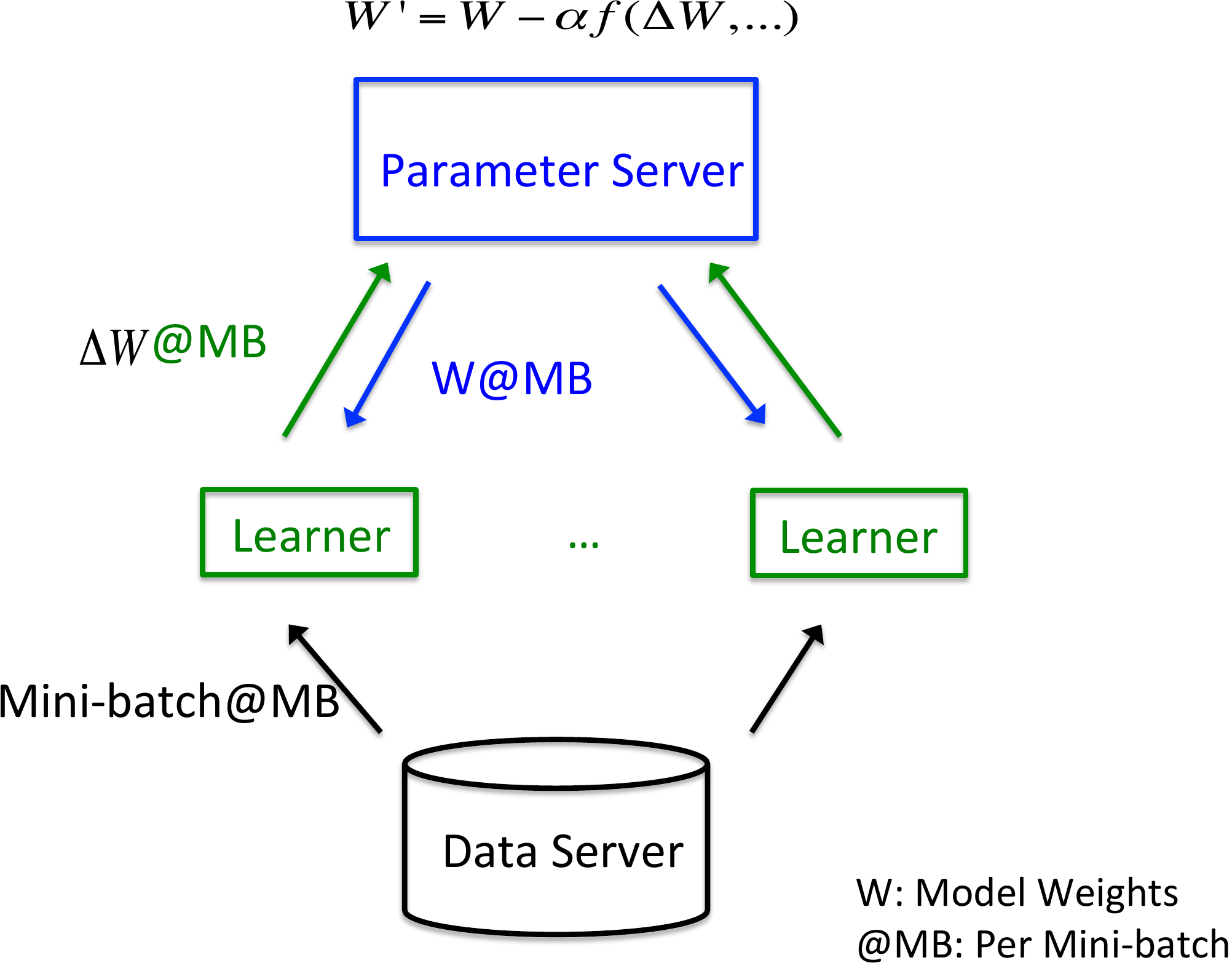}
  \captionof{figure}{\small A typical parameter server architecture.}
  \label{fig:design_base}
\end{minipage}%
\hspace{0.2cm}
\begin{minipage}{.4\linewidth}
  \centering
  \includegraphics[trim={0 8.1cm 0 0},clip,width=.8\linewidth]{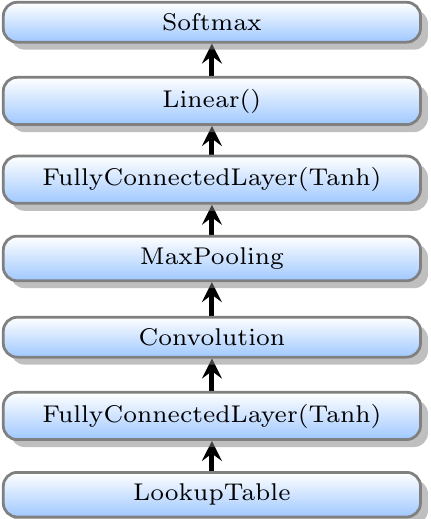}
  \captionof{figure}{\small Convolutional Neural Network based model.}
  \label{fig:nlc_model}
\end{minipage}
\end{figure}

To accelerate deep learning training, practitioners usually adopt the \textbf{Parameter Server} (PS) 
architecture as illustrated in Figure~\ref{fig:design_base}. Each \textbf{learner} retrieves a 
mini-batch of training samples from data storage, calculates gradients and sends the gradients 
to the PS. PS then updates its weights using received gradients. Before calculating 
the next gradients, each learner pulls weights from the PS. 
We use $\lambda$ to represent number of learners. Two most-widely adopted parameter server 
communication protocols are Synchronous SGD (\textbf{SSGD}) and Asynchronous SGD (\textbf{ASGD}). 
In SSGD, the PS 
collects gradients from each learner and then updates weights following the rule defined in 
Equation~\ref{eqn:ssgd}. SSGD is mathematically equivalent to SGD when $\lambda \times \mu$ is equal to the mini-batch size used in SGD.    
\begin{equation}
	\begin{aligned}
		\nabla \theta^{(k)}(i) &= \frac{1}{\lambda}{\sum_{l=1}^{\lambda} \nabla\theta_{l}^{(k)}}\\
		\theta^{(k)}(i+1) &= \theta^{(k)}(i) - \alpha\nabla \theta^{(k)}(i)
	\end{aligned}
 \label{eqn:ssgd}
\end{equation}

SSGD is not computationally efficient because the PS stalls the learners until it finishes collecting the gradients from all learners and updating the 
weights. ASGD relaxes such constraints by applying gradient update rule defined in Equation~\ref{eqn:asgd}. 
\begin{equation}
\begin{aligned}
\nabla \theta^{(k)}(i) &= \nabla\theta_{l}^{(k)}, L_{l} \in {L_{1}, ..., L_{\lambda}} \\
\theta^{(k)}(i+1) &= \theta^{(k)}(i) - \alpha\nabla \theta^{(k)}(i)
\end{aligned} 
\label{eqn:asgd}
\end{equation}

In ASGD, whenever PS receives a gradient from any learner, PS starts updating its weights. ASGD has the obvious runtime performance advantage over SSGD because learners do not wait for each other to start communicating with the PS. On the other hand, PS and the learners see different weights. PS always has the most up-to-date weights and the discrepancy between the weights used in a learner and the weights stored on PS  is measured by staleness. When PS updates the weights, it increments the weights's (scalar) timestamp by 1; \textbf{staleness} is defined as the difference between the timestamp of the learner's weights and the timestamp of PS's weights. 
A large staleness can cause learners to mis-calculate the gradients, which leads to a slower convergence rate\cite{bounded_staleness, distbelief, zhang-ijcai-2016, zhang-icdm-2016}.  We explain why training with a  small mini-batch size can effectively reduce staleness in Section~\ref{sec:smallbs_justification}. Additionally, several recent works~\cite{mpiT,feng:2016, liu-asgd-nips-2015, zhang-ijcai-2016} demonstrate that ASGD can converge to a similar model accuracy ($\pm 1\%$) as SGD after training with the same number of epochs, when the staleness is bounded in the system (typically up to a dozen of learners in the system). Assuming $T_{1}$ is the time for a single learner SGD algorithm to train $E$ epochs and $T_{2}$ is the time for ASGD algorithm to train $E$ epochs,  ASGD \textbf{speed up} is defined as $\frac{T1}{T2}$. For a fair comparison, one must also certify that the model accuracy trained by ASGD is similar to that of SGD after $E$ epochs.


\subsection{Characteristics of the TaaS Workloads}
\label{sec:nlc}

In this section, we detail the characteristics of the training-as-a-service workloads by studying IBM Watson's natural language classification (NLC) service, which is the most popular service on IBM Watson's cognitive computing cloud and used by thousands of enterprise-level customers globally. 


The NLC task is to classify input sentences into a target category in a predefined 
label set. NLC has been extensively used in practical applications, 
including sentiment analysis, topic classification, question classification, 
author profiling, intent classification, and even bug detection, etc. 
State-of-the-art method for NLC is based on deep 
learning~\cite{dp,Schmidhuber15,Goodfellow-et-al-2016-Book, 
kalchbrenner-grefenstette-blunsom:2014,kim:2014:EMNLP2014}. 
Figure \ref{fig:nlc_model} 
illustrates the 
deep learning model used in the NLC service.

NLC service is deployed as \textbf{''training-as-a-service''} in the cloud. 
After customers upload their in-house training data to the cloud, the NLC model 
training will be triggered in the background. The NLC model is ready to use after the training completes. 
Hence from the customer perspective, the turn around time is the model training time.
Although the deep learning brings superior classification accuracy, one known 
issue is the time-consuming training phase, which can severely affect customer's experience.  \textbf{Minimizing the training time has become IBM Watson NLC's top priority.}


In order to improve the performance for neural network training, previous 
work has attempted to use scale-out frameworks to coordinate learners distributed on different 
computing nodes. Good speedup on image recognition tasks like  
ImageNet\cite{imagenet} has been reported. However, this type of tasks does 
not represent the commercial TaaS workloads. By examining the real TaaS 
workloads, we find that corpus size of training sets are 
generally less than 10k for most use-cases and usually they come with a diverse set of labels. The reason is that in practice annotating training data is expensive for most customers. In addition, we have 
identified the following characteristics that are critical to the quality of trained model:

\noindent\textbf{(1) Large batch sizes can incur significant accuracy loss}: 
It is well-known that using large mini-batch can improve GPU utilization and incur less demand for communication; however, this runtime performance improvement 
must not sacrifice model accuracy. Our field study reveals that the deep learning method used in this paper is on average 3\%-6\% more accurate than other much less computation-intensive methods (e.g., SVM) for NLC tasks.  Therefore, an accuracy loss larger than 3\%-6\% will invalidate the use of deep learning for NLC tasks. To study the impact of different batch sizes on model accuracy, we use 4 representative NLC datasets (the detailed description of each task is given in Table~\ref{tab:task_description}) 
and evaluate the accuracy under different batch sizes in 
Figure~\ref{fig:nlc-batch-accuracy}. The batch size is increased from 1 to 128. Experimental results 
show that using large batch sizes would result in unacceptable accuracy loss for 
three out of four cases. When using large mini-batch size, models for challenging workloads (a large amount of labels and little training data for each label) such as \jewel and \welltok do not even converge. Other researchers have also observed that large batch size slows down convergence~\cite{smallbatch_wilson, smallbatch_li,smallbatch_keskar}. 

\begin{figure}[thb]    
    \vspace{-0.0pc} 
    \begin{center}    
    {\includegraphics[width=0.46\textwidth]{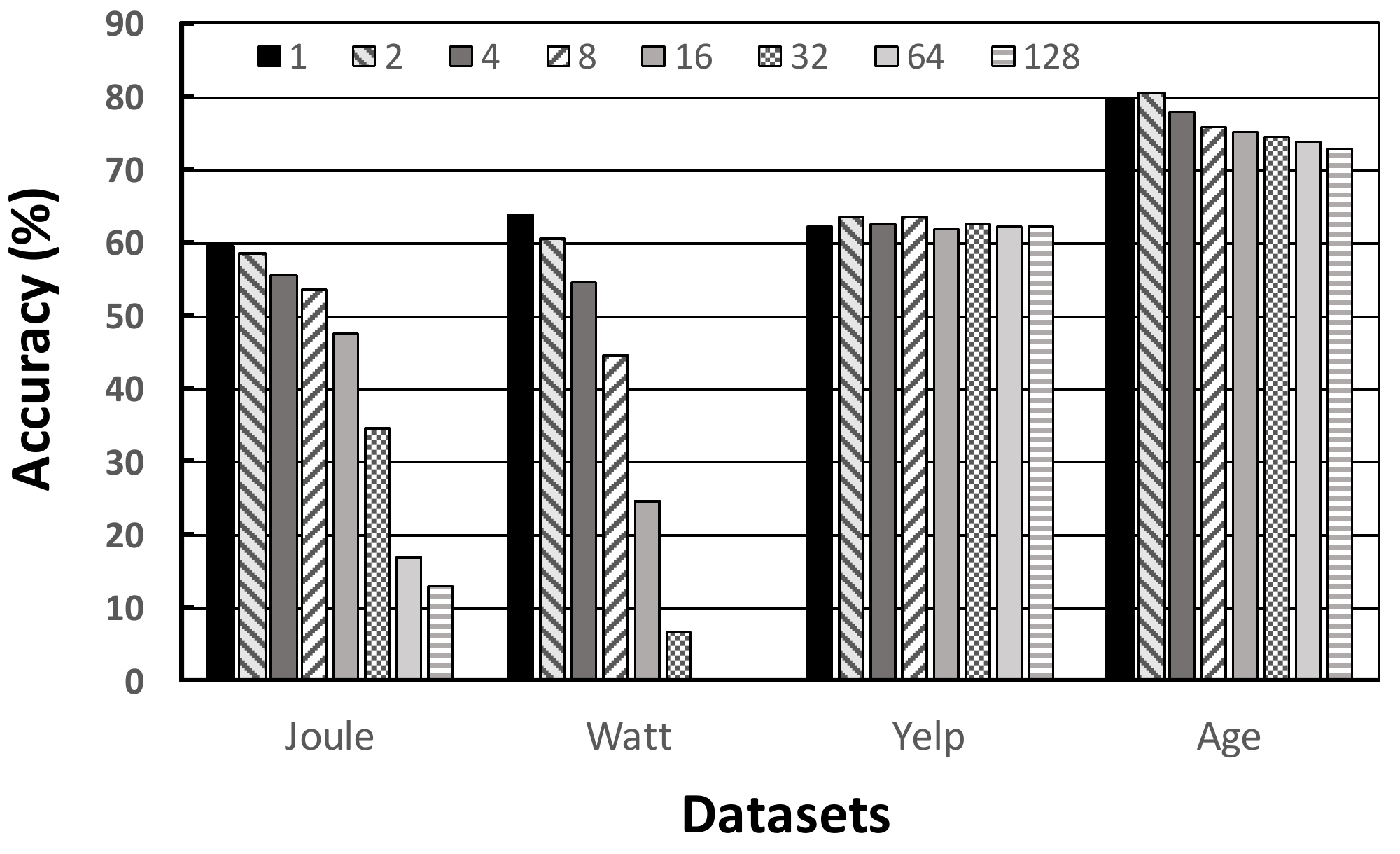}}
    \caption{\small Accuracy performance of different datasets under different batch sizes. Mini-batch size ranges from 1 to 128. The number of training epochs is fixed at 200. Using large mini-batch size severely decreases model accuracy. }
    \label{fig:nlc-batch-accuracy}
    \end{center}   
\end{figure}

\noindent\textbf{(2) Low communication frequency decreases model quality}: 
Communication bandwidth is typically much lower in the cloud than in the HPC systems. 
One way of mitigating the communication bottleneck is to  
allow each learner to process many mini-batches before it synchronizes 
with the parameter server. However, although less-frequent communication can efficiently 
increase the GPU utilization, but it can severely decrease model accuracy as shown in Table~\ref{tab:staleness-utilization}. Intuitively, less frequent communication causes higher discrepancy (i.e., staleness) between learners and the PS and it will decrease model accuracy. Therefore, learners should communicate with the PS as frequently as possible. Ideally, each learner shall communicate with the PS after each mini-batch training.
\begin{table}[t]
\scriptsize
\centering
\begin{tabular}{|l|r|r|r|r|}
\hline
\multicolumn{ 1}{|l|}{} & \multicolumn{ 2}{c|}{\textbf{Accuracy}} & \multicolumn{ 2}{c|}{\textbf{GPU utilization}} \\ 
\multicolumn{ 1}{|l|}{\textbf{Workload}} & \multicolumn{ 2}{c|}{\textbf{(\%)}} & \multicolumn{ 2}{c|}{\textbf{(\%)}} \\ \cline{ 2- 5}
\multicolumn{ 1}{|l|}{} & \multicolumn{1}{l|}{\textbf{$CI$=1}} & \multicolumn{1}{l|}{\textbf{$CI$=8}} & \multicolumn{1}{l|}{\textbf{$CI$=1}} & \multicolumn{1}{r|}{\textbf{$CI$=8}} \\ \hline
\textbf{\jewel} & 57.70 & 39.80 & 7.90 & 35.80 \\ \hline
\textbf{\welltok} & 57.80 & 55.10 & 4.90 & 29.40 \\ \hline
\textbf{Age} & 74.10 & 63.60 & 4.00 & 24.90 \\ \hline
\textbf{MR} & 77.20 & 50.00 & 5.50 & 33.70 \\ \hline
\textbf{Yelp} & 56.90 & 20.00 & 8.80 & 42.90 \\ \hline
\end{tabular}
\caption{\small Accuracy loss and GPU utilization with different communication interval in the parameter server based training framework. $CI$ is the communication interval, measured as the number of mini-batches each learner has processed before it communicates with the parameter server. High $CI$ can improve GPU utilization but severely affects model accuracy. This experiment is conducted using the mpiT package, on a 12 NVIDIA K20-GPU cluster with a 10Gb/s interconnect.}
\label{tab:staleness-utilization}
\end{table}

\noindent\textbf{(3) Conservative hyper-parameter configuration is imperative}: 
Previous research focuses on how to speedup training for one specific dataset and heavy hyper parameter tuning is required to achieve the best possible accuracy. For example, Table~\ref{tab:motivation} shows the typical hyper-parameter setups for training CIFAR and ImageNet with AlexNet\cite{alexnet} model. The setups vary greatly for different datasets.   
In contrast, the TaaS users have neither expertise nor resources for hyper-parameter tuning. 
Customers just upload the 
training data and then expect a well trained model to be ready in a short 
period of time. As a result, the hyper-parameters have to be preset to 
fixed values to cover diverse use cases. Table~\ref{tab:nlc_setup} describes the hyper-parameter used in NLC. For thousands of different datasets, NLC adopts a much simpler and more conservative setup. Note this is a significant difference from commonly evaluated workloads (e.g., CIFAR and ImageNet), where hyper-parameter tuning is specific to each dataset/model and usually is a result of a multi-person-years effort\cite{hyper_param_search, hyper_param_bayes}. In TaaS, \textbf{conservative 
configurations with small batch size, high communication frequency, and small 
learning rate are commonly adopted to satisfy a wide range of users.}


  
\begin{table}[htbp]
\scriptsize
\centering
\begin{tabular}{|l|l|l|l|l|}
\hline
Dataset & \multicolumn{ 2}{l|}{CIFAR} & \multicolumn{ 2}{l|}{ImageNet} \\ \hline
Mini-batch size & \multicolumn{ 2}{r|}{128} & \multicolumn{ 2}{r|}{256} \\ \hline
Number of epochs & \multicolumn{ 2}{r|}{100} & \multicolumn{ 2}{r|}{40} \\ \hline
Learning Rate(LR) & Epoch & LR & Epoch & LR \\ \hline
\multicolumn{ 1}{|l|}{} & E1-E25 & \multicolumn{1}{r|}{1} & E1-E18 & \multicolumn{1}{r|}{0.01} \\ \cline{ 2- 5}
\multicolumn{ 1}{|l|}{} & E26-E50 & \multicolumn{1}{r|}{0.5} & E19-E30 & \multicolumn{1}{r|}{0.005} \\ \cline{ 2- 5}
\multicolumn{ 1}{|l|}{} & E51-E75 & \multicolumn{1}{r|}{0.25} & E31-E40 & \multicolumn{1}{r|}{0.001} \\ \cline{ 2- 5}
\multicolumn{ 1}{|l|}{} & E75-E100 & \multicolumn{1}{r|}{0.125} &  &  \\ \hline
\end{tabular}
\caption{The typical hyper-parameter setup when training CIFAR and ImageNet with AlexNet model. To achieve best possible accuracy, researchers conduct heavy dataset-specific hyper-parameter tuning (e.g., sophisticated learning rate adaption schemes). }
\label{tab:motivation}
\end{table}

\begin{table}[htbp]
\vspace{0.2cm}
\centering
\scriptsize
\begin{tabular}{|l|r|l|l|}
\hline
Dataset size & \multicolumn{1}{l|}{small} & medium & large \\ \hline
Mini-batch size & 2 & \multicolumn{1}{r|}{4} & \multicolumn{1}{r|}{32} \\ \hline
Number of epochs & \multicolumn{ 3}{r|}{200} \\ \hline
Learning rate & \multicolumn{ 3}{r|}{0.01} \\ \hline
\end{tabular}
\caption{The hyper-parameter setup in NLC. Instead of heavy tuning for each dataset as shown in Table~\ref{tab:motivation}, NLC uses a simpler and more conservative setup (e.g., small mini-batch size, small learning rate and large number of epochs) to satisfy all the users. Users are categorized in 3 groups based on the number of their training samples: small ($<$ 10K), medium (10K -- 100K), and large ($>$ 100K). 
}
\label{tab:nlc_setup}
\end{table}

In summary, TaaS workload characteristics study suggests that 
adopting the scale-out solutions such like distributed 
parameter server based frameworks is unsuitable for a set of industry deep 
learning tasks that require small mini-batch size and high frequency of model exchange.

\subsection{Theoretical Justification of Using Small Mini-batch}
In the previous section, we have empirically demonstrated training with large batch size can cause unacceptable accuracy loss. Based on a recent theoretical study \cite{liu-asgd-nips-2015}, we now justify that why training with small batch size can counter system staleness and is desired for distributed deep learning in general. 
\label{sec:smallbs_justification}
\begin{theorem} \cite{liu-asgd-nips-2015}
Under certain commonly used assumptions, if the learning rate is chosen in the optimal way and the staleness $\sigma$ is 
bounded by
\begin{equation}
\sigma \leq O\left(\sqrt{E/\mu^2}\right)
\label{eq:cond}
\end{equation}
where $E$ is the total number of epochs and $\mu$ is the mini-batch size, 
then the asynchronous parallel SGD algorithm converges in the rate
\[
O(\sqrt{1/E}).
\]
\end{theorem}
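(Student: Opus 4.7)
The plan is to follow the classical descent-lemma / telescoping-sum argument for non-convex stochastic gradient methods, track how the delay $\sigma$ and the mini-batch size $\mu$ enter the per-iteration bound, then pick the step size so that the asynchrony-induced error is absorbed into the leading $O(\sqrt{1/E})$ rate, reading off the required bound on $\sigma$. First I would state the assumptions backing \cite{liu-asgd-nips-2015}: $L$-smoothness of $F$, unbiasedness of the mini-batch stochastic gradient $g_\mu$ with per-sample variance bounded by $V^2$, and uniformly bounded delays $\tau_i \le \sigma$. Writing the ASGD update as $\theta_{i+1} = \theta_i - \alpha\, g_\mu(\theta_{i-\tau_i})$ over $K$ parameter-server iterations, I identify $K\mu$ with the total number of sample presentations, hence $K\mu \propto E$ up to constants depending on dataset size.

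Block one applies $L$-smoothness,
\[
F(\theta_{i+1}) \le F(\theta_i) - \alpha\langle\nabla F(\theta_i), g_\mu(\theta_{i-\tau_i})\rangle + \tfrac{L\alpha^2}{2}\|g_\mu(\theta_{i-\tau_i})\|^2,
\]
takes conditional expectation, and decomposes the delayed gradient as $\nabla F(\theta_{i-\tau_i}) = \nabla F(\theta_i) + \Delta_i$. Smoothness controls the bias, $\|\Delta_i\| \le L\sum_{j=i-\tau_i}^{i-1}\|\theta_{j+1}-\theta_j\|$, which in expectation is of order $L\alpha\sigma\,\|g_\mu\|$. Young's inequality on the cross-term $\alpha\langle\nabla F(\theta_i), \Delta_i\rangle$ splits it into a descent-friendly piece that merges with $-\tfrac{\alpha}{2}\|\nabla F(\theta_i)\|^2$ and a residual of order $L^2\alpha^3\sigma^2 V^2/\mu$, the $1/\mu$ coming from mini-batch averaging shrinking $\mathbb{E}\|g_\mu\|^2$.

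Block two sums from $i=1$ to $K$, telescopes the $F$ values, and divides by $K$ to obtain a schematic bound
\[
\tfrac{1}{K}\sum_{i=1}^K \mathbb{E}\|\nabla F(\theta_i)\|^2 \;\lesssim\; \tfrac{F(\theta_0)-F^\ast}{\alpha K} + \tfrac{L\alpha V^2}{\mu} + \tfrac{L^2\alpha^3\sigma^2 V^2}{\mu}.
\]
Balancing the first two summands by the optimal choice $\alpha \asymp \sqrt{\mu/K}$ yields the leading rate $O(1/\sqrt{\mu K}) = O(\sqrt{1/E})$ after identifying $K\mu \propto E$. Block three then enforces that the asynchrony summand not exceed this leading rate; substituting $\alpha \asymp \sqrt{\mu/K}$ reduces the inequality $L^2\alpha^3\sigma^2/\mu \lesssim 1/\sqrt{\mu K}$ to $\sigma^2 \lesssim K/\mu$, which under $K\mu \propto E$ becomes $\sigma^2 \lesssim E/\mu^2$, i.e.\ the claimed bound.

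The main obstacle, and the step I expect to consume most of the care, is the rigorous handling of the bias $\Delta_i$. The delayed iterate $\theta_{i-\tau_i}$ depends on random mini-batches and random delays whose sigma-fields extend beyond the naive conditioning point, so a careless expectation breaks the unbiasedness of the cross-term and loses the right power of $\sigma$. The remedy, as in \cite{liu-asgd-nips-2015}, is to condition on the sigma-field at time $i-\sigma$, treat the delay pattern as adversarial but bounded, and use Cauchy--Schwarz together with Young's inequality to repackage the coupling terms into the clean $O(\alpha^3\sigma^2/\mu)$ residual used above. Once that bookkeeping is in place, the remaining algebra is routine and both the convergence rate and the staleness condition follow from the step-size balancing.
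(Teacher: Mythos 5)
The paper does not actually prove this theorem: it is imported verbatim from \cite{liu-asgd-nips-2015}, and the text following it only interprets the condition (staleness tolerance scaling like $\sqrt{E}/\mu$, hence the preference for small mini-batches), so there is no in-paper proof to compare yours against. Your sketch is the standard route taken in the cited reference itself: $L$-smoothness descent lemma on the delayed update, control of the delay-induced bias by summing the at most $\sigma$ intermediate steps, telescoping, and then choosing $\alpha \asymp \sqrt{\mu/K}$ so that the asynchrony residual is dominated by the leading $O(1/\sqrt{\mu K})$ term; with $K\mu \propto E$ this correctly reproduces both the $O(\sqrt{1/E})$ rate and the condition $\sigma^2 \lesssim K/\mu \propto E/\mu^2$. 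One caution if you were to write this out fully: the exact powers of $\alpha$ and $\sigma$ in the asynchrony residual require the careful conditioning you flag at the end (the delayed iterate is not measurable with respect to the natural filtration at time $i$), and the bookkeeping in \cite{liu-asgd-nips-2015} yields delay terms whose constants differ somewhat from your schematic $L^2\alpha^3\sigma^2 V^2/\mu$; but the step-size balancing is dimensionally consistent and the final staleness condition you derive matches the quoted statement, so the proposal is sound as a reconstruction of the cited proof rather than of anything proved in this paper.
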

The result suggests the tolerance of the staleness $T$ relies on 
the mini-batch size $\mu$. First note that the convergence rate 
$O(\sqrt{1/E})$ is optimal. The prerequisite is that the staleness $\sigma$ 
is bounded by $O(\sqrt{E/\mu^2})$. The staleness $\sigma$ is 
usually propositional to the total number of learners. To 
satisfy the condition in (\ref{eq:cond}), either $\mu$ should be small 
enough or the epoch number $K$ should be large enough. In other words, 
given the number of learners and the total epoch number (or the total 
computational complexity), small mini-batch size is preferred. In addition, 
it also explains why small mini-batch size is potentially preferred even 
for SGD (running on a single worker), since SGD with mini-batch 
size $\mu$ can be considered as running Async-SGD with mini-batch 
size $1$ with $\mu$ workers (it implies that the staleness is $O(\mu)$). 
Thus, SGD with large mini-batch size is equivalent to Async-SGD with 
a small mini-batch size but a large staleness.

\section{Communication Bandwidth Requirement in TaaS}
\label{sec:modeling}
In this section, we measure the computation time for different mini-batch sizes(i.e. $\mu$)
over NLC workloads and public image classification 
workloads. We then calculate the minimum memory bandwidth 
requirement to achieve any speedup when ASGD protocol is employed. Finally, we demonstrate why none of the existing scale-out or scale-up solution can accelerate NLC workloads.

Each learner's execution loop consists of three components: $T_{train}$ (gradient 
calculation), $T_{pull}$ (pull weights), $T_{push}$(push gradients). 
Each parameter server's execution loop contains three components: $T_{receive}$ (receive gradients), $T_{apply}$ (apply weights update), and $T_{send}$ (send weights). When $\mu$ is large, $T_{train} + T_{pull} + T_{push} \gg T_{receive} + T_{apply} + T_{send}$; when $\mu$ is small, time spent on PS becomes the critical path.
In ASGD,  sending weights and receiving gradients operations may overlap. To achieve any speedup, we then must have $T_{train} \ge (T_{receive} + T_{apply})$ \footnote{(i)Apply update and receive gradients cannot overlap, since apply update can only start when gradients are fully received (ii) Assuming learner can push gradients and receive weights instantaneously}. Note that the apply update operation is memory-bound level 1 BLAS operation. Combined memory bandwidth between GPU and CPU (gradients transfer) and memory bandwidth used in CPU DRAM (weights update) are of the same order of magnitude. Further, gradients and weights are of the same size. We now can infer the required overall communication bandwidth to observe any speedup is at least $\frac{2 \times ModelSize}{TrainTime_{per minibatch}}$.  For NLC workload and image recognition workload, Table~\ref{tab:bandwidth_requirement} records Training time Per Epoch (TPE), Training Samples number ($N$), Model Size; and calculates the minimum Required Bandwidth (RB)  to observe any speedup.


\begin{table*}[htbp]
\scriptsize
\centering
\begin{tabular}{|r|r|r|r|r|r|r|r|r|r|r|r|r|}
\hline
\multicolumn{ 1}{|l|}{} & \multicolumn{ 2}{l|}{\textbf{\jewel}} & \multicolumn{ 2}{c|}{\textbf{\welltok}} & \multicolumn{ 2}{c|}{\textbf{Age}} & \multicolumn{ 2}{c|}{\textbf{Yelp}} & \multicolumn{ 2}{c|}{\textbf{CIFAR }} & \multicolumn{ 2}{c|}{\textbf{ImageNet}} \\ \cline{ 2- 13}
\multicolumn{ 1}{|l|}{} & \multicolumn{ 2}{l|}{\textbf{2.46K,7.69MB}} & \multicolumn{ 2}{c|}{\textbf{7K,20.72MB}} & \multicolumn{ 2}{c|}{\textbf{68.48K,72.86MB}} & \multicolumn{ 2}{c|}{\textbf{500K,98.60MB}} & \multicolumn{ 2}{c|}{\textbf{50K,59,97MB}} & \multicolumn{ 2}{c|}{\textbf{1280K,244.48MB}} \\ \cline{ 2- 13}
\multicolumn{ 1}{|l|}{\textbf{$\mu$}} & \multicolumn{1}{l|}{\textbf{TPE}} & \multicolumn{1}{l|}{\textbf{RB}} & \multicolumn{1}{l|}{\textbf{TPE}} & \multicolumn{1}{l|}{\textbf{RB}} & \multicolumn{1}{l|}{\textbf{TPE}} & \multicolumn{1}{l|}{\textbf{RB}} & \multicolumn{1}{l|}{\textbf{TPE}} & \multicolumn{1}{l|}{\textbf{RB}} & \multicolumn{1}{l|}{\textbf{TPE}} & \multicolumn{1}{l|}{\textbf{RB}} & \multicolumn{1}{l|}{\textbf{TPE}} & \multicolumn{1}{l|}{\textbf{RB}} \\ \cline{ 2- 13}
\multicolumn{ 1}{|l|}{} & \multicolumn{1}{l|}{\textbf{(sec)}} & \multicolumn{1}{l|}{\textbf{(GB/s)}} & \multicolumn{1}{l|}{\textbf{(sec)}} & \multicolumn{1}{l|}{\textbf{(GB/s)}} & \multicolumn{1}{l|}{\textbf{(sec)}} & \multicolumn{1}{l|}{\textbf{(GB/s)}} & \multicolumn{1}{l|}{\textbf{(sec)}} & \multicolumn{1}{l|}{\textbf{(GB/s)}} & \multicolumn{1}{l|}{\textbf{(sec)}} & \multicolumn{1}{l|}{\textbf{(GB/s)}} & \multicolumn{1}{l|}{\textbf{(sec)}} & \multicolumn{1}{l|}{\textbf{(GB/s)}} \\ \hline
1 & 5.61 & 7.00 & 25.22 & 11.50 & 574.80 & 17.36 & 4376.12 & 22.53 & \multicolumn{1}{l|}{N/A*} & \multicolumn{1}{l|}{N/A*} & \multicolumn{1}{l|}{N/A*} & \multicolumn{1}{l|}{N/A*} \\ \hline
2 & 3.22 & 6.10 & 14.11 & 10.28 & 309.51 & 16.12 & 2367.04 & 20.83 & 792.45 & 3.78 & 26957.54 & 11.61 \\ \hline
4 & 1.77 & 5.54 & 7.46 & 9.72 & 169.99 & 14.68 & 1299.42 & 18.97 & 502.63 & 2.98 & 15596.41 & 10.03 \\ \hline
8 & 1.00 & 4.89 & 4.06 & 8.93 & 104.39 & 11.95 & 802.04 & 15.37 & 356.46 & 2.10 & 9499.38 & 8.24 \\ \hline
16 & 0.75 & 3.27 & 2.79 & 6.49 & 77.79 & 8.02 & 571.10 & 10.79 & 290.95 & 1.29 & 7294.30 & 5.36 \\ \hline
32 & 0.72 & 1.69 & 2.38 & 3.82 & 69.52 & 4.49 & 451.61 & 6.82 & 261.23 & 0.72 & 5769.76 & 3.39 \\ \hline
64 & 0.80 & 0.76 & 2.37 & 1.91 & 77.14 & 2.02 & 403.69 & 3.82 & 245.68 & 0.38 & 5014.85 & 1.95 \\ \hline
128 & 1.00 & 0.31 & 2.77 & 0.82 & 100.22 & 0.78 & 402.97 & 1.91 & 234.38 & 0.20 & 4784.22 & 1.02 \\ \hline
\end{tabular}
\caption{Minimum bandwidth requirement to see any speedup. $\mu$ is mini-batch size, TPE stands for Time Per Epoch, RB stands for Required Bandwidth. *Both CIFAR and ImageNet use models that use Batch Normalization (BN), which requires $\mu \geq 2$.}
\label{tab:bandwidth_requirement}
\end{table*}

\underline{Why a scale-out solution will never work ?}
From Table~\ref{tab:bandwidth_requirement}, it is easy to see a 10GB/s bandwidth network is required to achieve any speedup for NLC workloads with the appropriate mini-batch size. In addition, to achieve $X$-fold ($X >$ 1) speedup, we need to multiply RB by a factor of $X$. Such a demanding bandwidth is beyond the capacity of advanced network techniques (e.g., RDMA). Note RB is also quite close to the peak memory bandwidth (e.g., PCI-e, DRAM), which indicates any extra memory copy may make speedup impossible. Thus, it is natural to infer that the only viable PS architecture is a tightly coupled multi-GPU system collocated on the same server that minimizes data copies and enables learners to asynchronously push gradients and pull weights. 

\underline{Why existing scale-up solutions are insufficient ?}
Among popular open-source deep learning frameworks, Caffe\cite{caffe}, Torch\cite{torch} and TensorFlow\cite{tensorflow} support multi-GPU training on the same node. However, they are designed for tasks where heavy hyper-parameter tuning is allowed so a larger mini-batch size may be appropriate (e.g., 256). It is easy to see from Table~\ref{tab:bandwidth_requirement} that it requires much higher communication bandwidth to support a small mini-batch than to support a large mini-batch.   
In addition, Caffe and TensorFlow only support SSGD on one node. We have demonstrated in Section~\ref{sec:nlc} that some of the workloads require mini-batch size to be as small as 2, which means Caffe and TensorFlow can at most make use of 2 GPUs (e.g., each GPU works with a mini-batch size of 1). Torch is the only open-source DL framework that supports both SSGD (via DPT) and ASGD (via mpiT) on a single-node. However, as demonstrated in Section~\ref{sec:eval-compare},  neither DPT nor mpiT can efficiently use the memory bandwidth on the same node. Furthermore, none of the existing solutions provide a fault-tolerance mechanism in the scale-up setting.

 \section{Design and Implementation}
\label{sec:design}
\subsection{Overall Design}


\Tool strives to minimize memory copy and enable high-concurrency to maximize communication bandwidth utilization. Figure~\ref{fig:design_async} depicts its design. To minimize memory copy, PS and learners use a shared-memory region to exchange gradients and weights. Each learner has a fixed number of slots in the producer-consumer queue, thus the entire system can be viewed as $\lambda$\footnote{$\lambda$ is the number of Learners.} single-producer-single-consumer queues. PS updates weights in place (i.e., HogWild! style). We use 4 \texttt{openmp} threads and unroll weights update loop 8 times to maximize DRAM throughput. To maximize system concurrency, each learner creates two additional threads -- push thread and pull thread. On the same GPU device, each learner maintains an on-device gradient staging buffer; so that after a learner finishes gradient calculation it can store the gradients in the buffer, and continue the next gradient calculation without waiting for the completion of push. On-device memory bandwidth is usually several hundreds of GB/s, which is much faster than device-host memory bandwidth (typically $\sim$10 GB/s). By buffering gradients on the same device, learner can train continuously while the push thread is pushing gradients to PS. Similarly, each learner also maintains a weights staging buffer on the same device. Learners do not communicate with each other, they only communicate with parameter server.  


Figure~\ref{fig:thread_code} details the necessary logic of PS and learners. We use the following naming conventions: variables that start with 'g\_' (e.g., g\_env, g\_param\_ptr) represent shared variables between PS and Learners, other variables (e.g., pullCnt, pushCnt) are shared variables between learner's main thread and its communication threads. PS (Figure~\ref{fig:ps_code}) iterates over the gradient queues in a round-robin fashion and it busy-loops when all the queues are empty. PS does not yield CPU via a conditional variable wait, because PS demands the most CPU cycles to process gradients and it is beneficial to have PS takeover gradients whenever they are ready.
Figure~\ref{fig:training_code} illustrates the logic of learner main thread. It calculates gradients on GPU's default stream. The learner main thread communicates with push thread (Figure~\ref{fig:push_code}) via a producer-consumer queue (Line 105 - 107 in Figure~\ref{fig:training_code} and corresponding Line 203-206, 220-222 in Figure~\ref{fig:push_code}) of size 1, i.e. variable \textit{pushCnt} in Figure~\ref{fig:training_code} and Figure~\ref{fig:push_code} alternates between 0 and 1. The push thread operates on a separate stream so that it can send gradients in concurrent with the learner thread calculating the gradients. Similarly the learner main thread communicates with the pull thread (Figure~\ref{fig:pull_code}) via a producer-consumer-queue (Line115-125 in Figure~\ref{fig:training_code} and in Figure~\ref{fig:pull_code}) of size 1, i.e., variable \textit{pullCnt} in Figure~\ref{fig:training_code} and Figure~\ref{fig:pull_code} alternates between 0 and 1. If the weights in the learner thread is current (i.e. has the same timestamp as the weights on PS), pullThd skips the pull request in this iteration.  By default, the PS updates weights at Line 7 in Figure~\ref{fig:ps_code} in a lock-free fashion (e.g., an incarnation of HogWild! algorithm\cite{hogwild}). \Tool also supports protecting weight updates from concurrent pulling via a read-write-lock.

Note that $cudaStreamSynchronize()$ invoked during the execution is to make sure the memory was flushed in place (either between host and device or within the same device) w.r.t the same stream. As a result, the corresponding memory copy between CPU/GPU or within GPU strictly follows programming order, which is necessary for our protocol verification, and which will be described in the next section. 

\begin{figure}[!htb]
  \centering
  \includegraphics[scale=0.25]{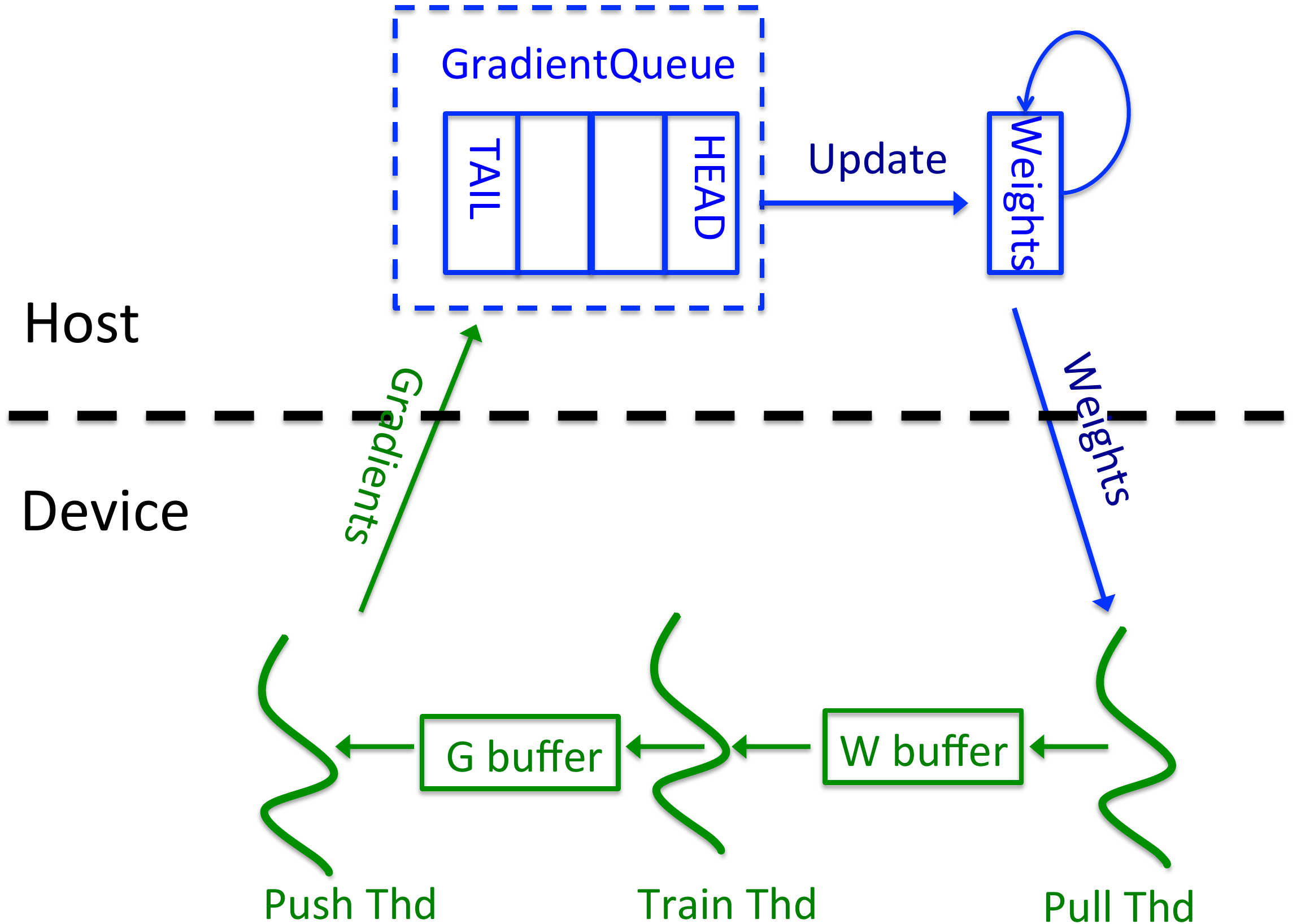} 
\caption{Overview of \Tool's Design}
\label{fig:design_async}
\end{figure}

\subsection{Verification of \Tool's communication protocol}
It is difficult to detect, avoid and fix concurrency bugs\cite{shan-asplos-2008}. \Tool relies on heavy communication between CPU threads, CPU-GPU interaction, and multi-stream operation within the same GPU device. It is imperative to verify the correctness of its communication protocol. In this section, we prove \Tool is deadlock-free in Theorem~\ref{theorem:deadlock-free} and verify \Tool's liveness property in Theorem~\ref{theorem:liveness}. 
\begin{figure*}[t]
  \centering
  \subfloat[{\small Host (Parameter Server Thread)}]{{\includegraphics[width=0.4\textwidth]{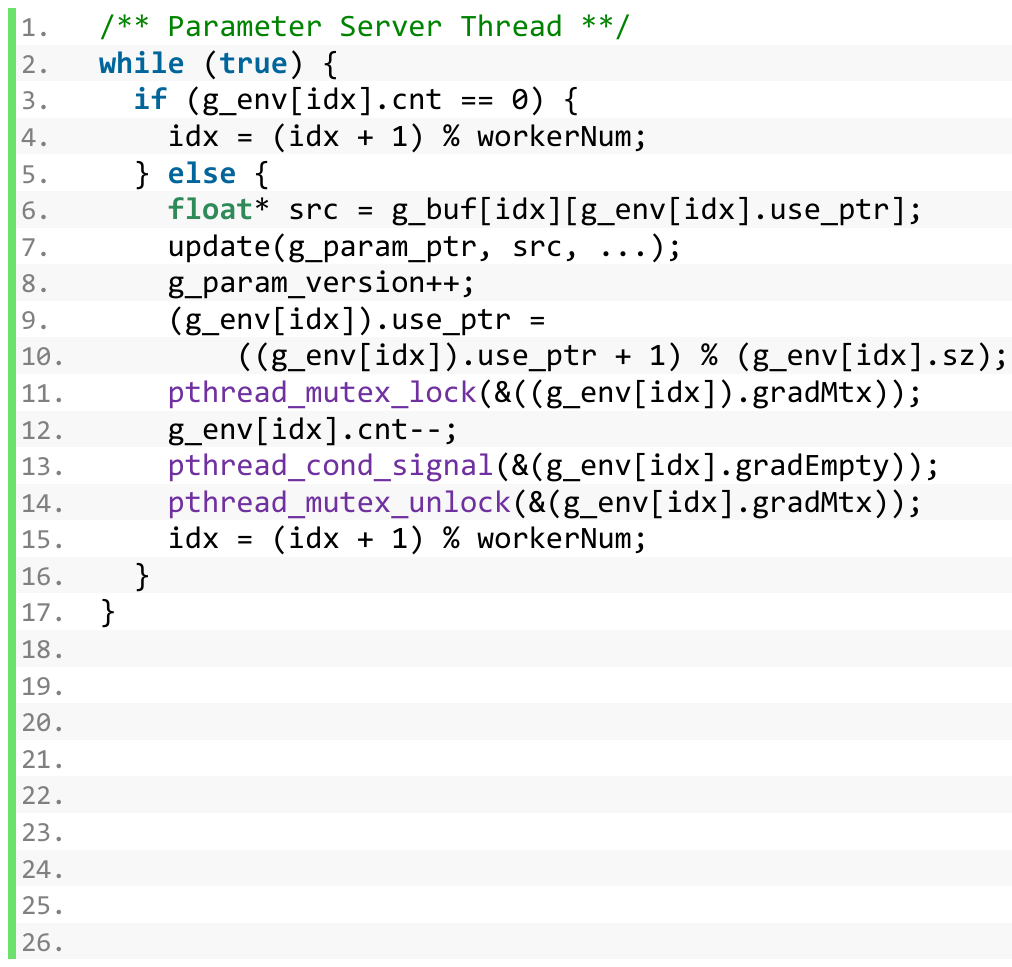}}\label{fig:ps_code}}
  \hspace{0.1in}
  \subfloat[\vspace{-1ex}{\small Pull Thread}]{{\includegraphics[width=0.4\textwidth]{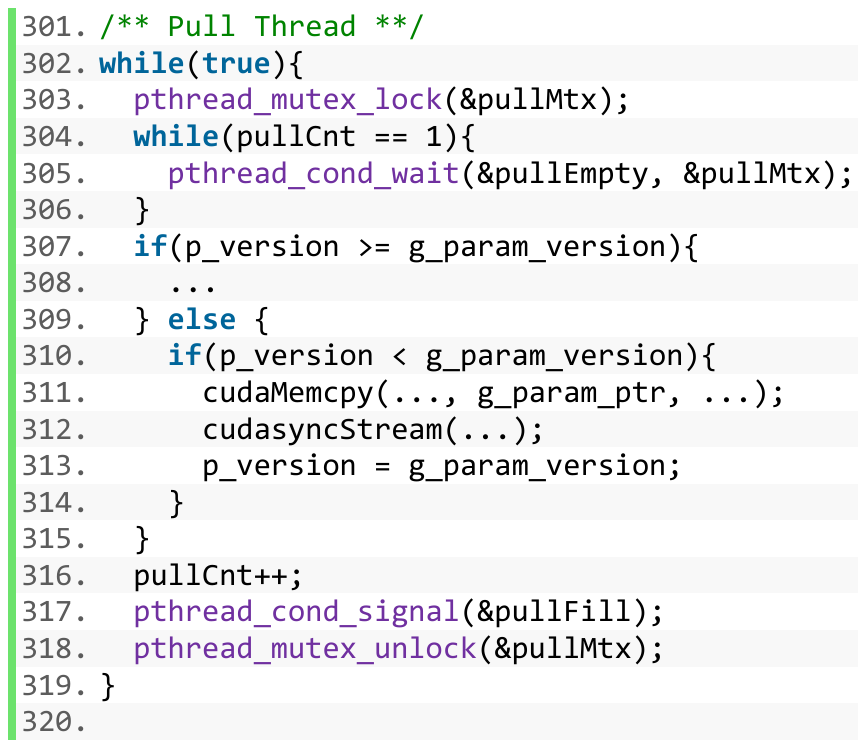}}\label{fig:pull_code}}\\[-0.5ex]   
  \subfloat[{\small Training Thread}]{{\includegraphics[width=0.4\textwidth]{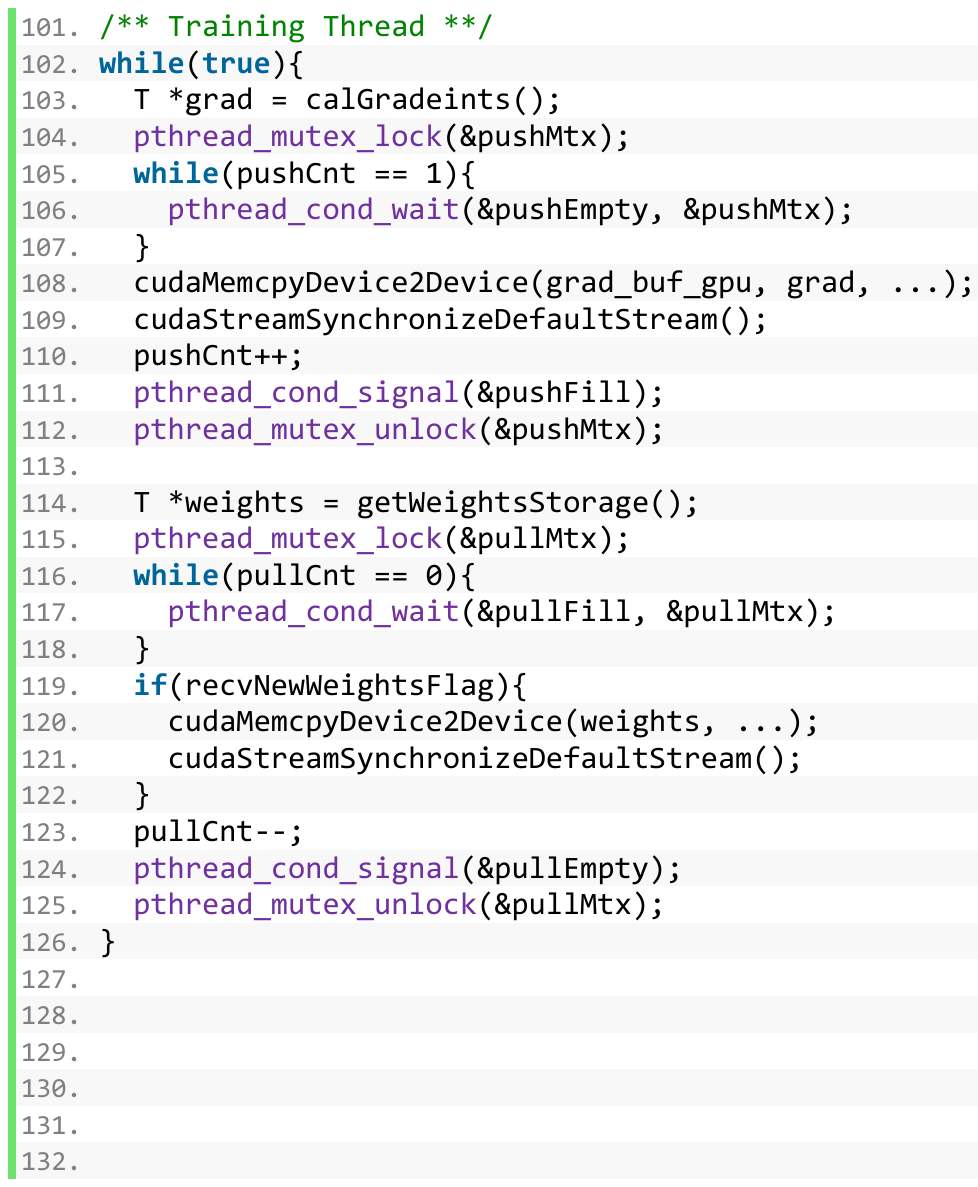}}\label{fig:training_code}}
  \hspace{0.1in}
  \subfloat[{\small Push Thread}]{{\includegraphics[width=0.4\textwidth]{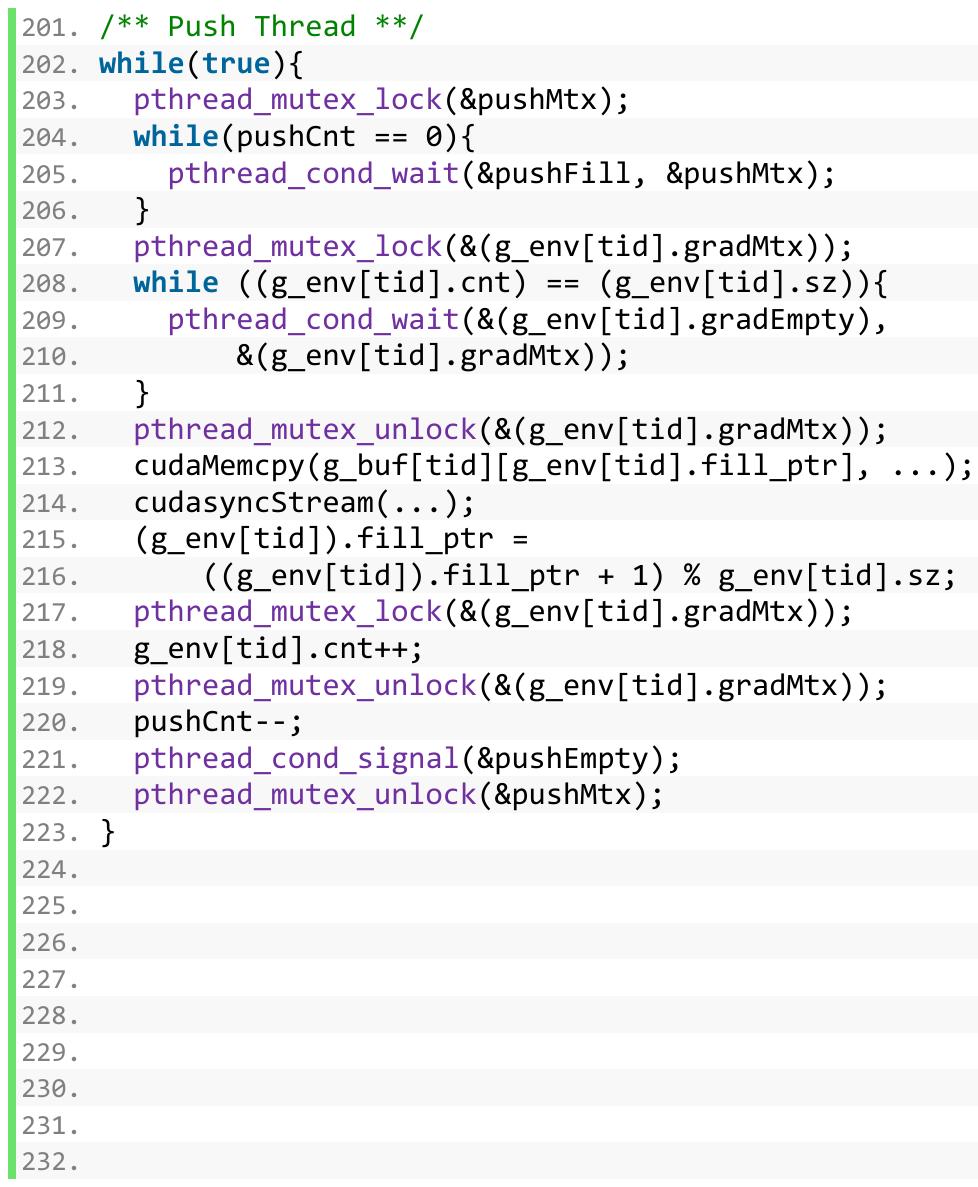}}\label{fig:push_code}}  
  \caption{\small Details of the parameter server, learner main thread (training thread), push thread and pull thread.}
  \label{fig:thread_code}
\end{figure*}

\begin{figure*}[t]
\centering
\includegraphics[width=0.8\textwidth]{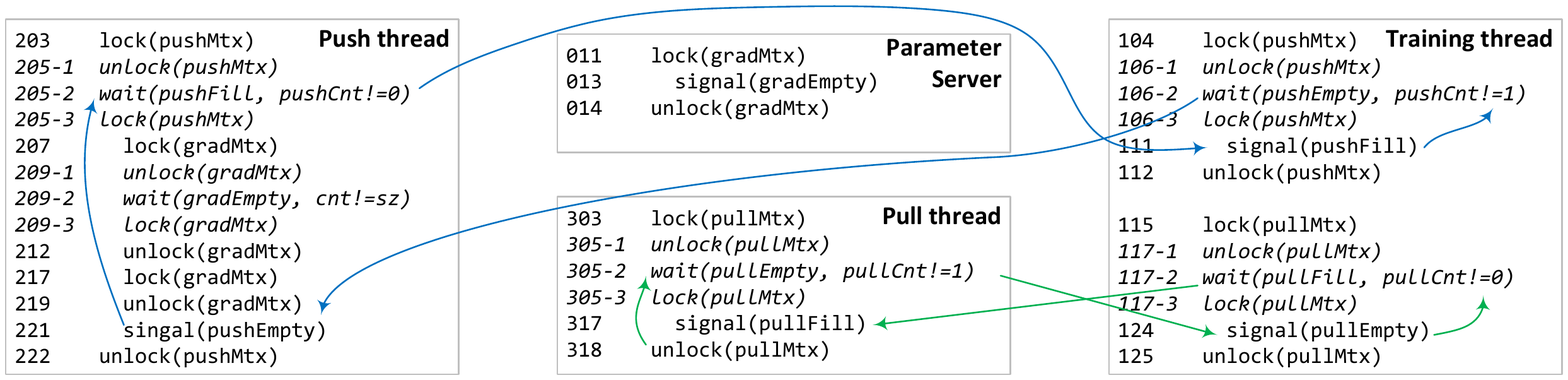}
\vspace{0.01in}
\caption{\small Synchronization operations extracted from Fig.~\ref{fig:thread_code}. 
The line numbers refer to the same lines in Fig.~\ref{fig:thread_code}. Lines in 
shape of $i$-$1$, $i$-$2$ and $i$-$3$ are equivalently transformed from the 
condition check loop at line $i$ in Fig.~\ref{fig:thread_code}. Operation 
\textit{wait(s, c)} means the thread is blocked until it's waken up by signal 
\textit{s} and condition \textit{c} is true.} \label{fig:wfg}
\end{figure*}

\begin{lemma}\label{lemma1}
  The value of \textsf{pushCnt} and \textsf{pullCnt} can only be 0 or 1. For the $idx$-th learner, $0 \leq$ \textsf{g\_env[idx].cnt} $\leq$ \textsf{g\_env[idx].sz}. \end{lemma}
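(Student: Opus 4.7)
\medskip

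\noindent\textbf{Proof proposal.} The plan is to establish all three bounds as loop invariants that are preserved by every atomic step of every thread in Fig.~\ref{fig:thread_code}. I will first fix the initial state: at startup \textsf{pushCnt} $= 0$, \textsf{pullCnt} $= 0$, and \textsf{g\_env[idx].cnt} $= 0$ for every learner index \textit{idx}, so all three bounds hold vacuously. Then I will do an induction on the length of the global execution trace, performing a case analysis over which line in which thread fires next, and check that each write preserves the invariant it could potentially violate.

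For \textsf{pushCnt} there are exactly two writers: the training thread (Fig.~\ref{fig:training_code}, lines 105--107) and the push thread (Fig.~\ref{fig:push_code}, lines 203--206 and 220--222). The training thread only executes its write when its preceding wait-loop has observed \textsf{pushCnt}$=0$, so the write $\textsf{pushCnt}\assign 1$ changes the value from $0$ to $1$. Symmetrically, the push thread writes $\textsf{pushCnt}\assign 0$ only after its wait-loop observed \textsf{pushCnt}$=1$. Because the two writers alternate in this checked fashion, the inductive step reduces to: if the previous value is in $\{0,1\}$ and the guard is satisfied, the next value is still in $\{0,1\}$. The same argument, applied to the training/pull-thread pair, handles \textsf{pullCnt}.

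For the queue counter $c := \textsf{g\_env[idx].cnt}$ I will identify its two writers: the push thread increments it (after its guard has verified $c < \textsf{sz}$, so the post-state satisfies $c\le\textsf{sz}$), and the PS thread decrements it (after its guard has verified $c > 0$, so the post-state satisfies $c\ge 0$). Hence, assuming $0\le c\le\textsf{sz}$ held before the step, it still holds after. To make this rigorous I need the check-then-modify on \textsf{g\_env[idx].cnt} to be treated as atomic with respect to other writers; this is ensured by the fact that for a given $idx$ there is only one producer (the push thread of learner \textit{idx}) and one consumer (the PS), so no two writers race on the same counter, and the stream/host synchronization via $cudaStreamSynchronize()$ flagged at the end of Section 4.1 guarantees that the program-order reads used in the guards are up-to-date at commit time.

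The main obstacle I anticipate is not the arithmetic but the memory-consistency argument: I must justify that the ``observe guard, then write'' pattern can be treated as a single atomic step for each pair of writers. I plan to address this by appealing to two facts already established in the paper, namely that (i) each shared counter has at most one producer thread and at most one consumer thread, and (ii) the explicit stream synchronizations enforce program order between the GPU side and the host side; together these remove the possibility of a lost update or a stale guard, so the inductive step goes through uniformly, and the three invariants claimed by the lemma follow.
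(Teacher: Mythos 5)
Your proposal is correct and follows essentially the same route as the paper's proof: both arguments rest on the observation that each counter has a single increment site guarded by a wait-loop that forbids exceeding the upper bound and a single decrement site guarded by a wait-loop that forbids going below zero, so the value stays in range. Your explicit induction over the execution trace (and the single-producer/single-consumer atomicity remark) is just a more formal packaging of the paper's guard-based argument, not a different method.
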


\begin{proof}
\vspace{-0.03in}
We will show that \textsf{pushCnt} and \textsf{pullCnt} cannot be larger than 
$1$ or smaller than $0$. Line $110$ (Fig.~\ref{fig:training_code}) is the only 
place where \textsf{pushCnt} can be incremented. If \textsf{pushCnt} can be 
larger than 1 then there must be an 
iteration where \textsf{pushCnt} is $1$ before executing line $110$, since the increment is 1 per iteration. However, 
this is impossible because line $110$ is not reachable due to the condition 
check loop at line $105$. Similarly, line $220$ (Fig.~\ref{fig:push_code}) is 
the only place where \textsf{pushCnt} can be decremented. It cannot 
be smaller than $0$ due to the loop at line $204$.  Therefore, \textsf{pushCnt} 
can only be $0$ or $1$. The claims about \textsf{pullCnt} and \textsf{g\_env[idx].cnt} 
can be proved in the same way.
\end{proof}

\begin{lemma}
  Once signaled, a thread blocked by a condition wait (line 106, 117, 205, 209 
or 305 in Fig.~\ref{fig:thread_code}) will wake up and exit the 
corresponding condition check loop. 
\end{lemma}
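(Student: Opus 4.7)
The plan is to argue, wait-site by wait-site, that when the corresponding signaler runs, the relevant shared counter is already in the value required by the condition check, so the awakened thread immediately exits its \texttt{while}-loop. The backbone is Lemma~\ref{lemma1}, which confines each of \textsf{pushCnt}, \textsf{pullCnt}, and \textsf{g\_env[idx].cnt} to a very small set of values, plus the single-producer/single-consumer structure of each queue, which means that for every counter there is exactly one thread that can increment it and exactly one thread that can decrement it.

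First I would tabulate, for each of the five wait-sites (lines 106, 117, 205, 209, 305), three things: the boolean guard \texttt{c} of the loop, the unique writer thread whose update can make \texttt{c} true, and the matching \texttt{signal} statement in that writer. For example, line~106 guards the training thread on \textsf{pushCnt}$\,==\,0$, and the only thread that can decrement \textsf{pushCnt} is the push thread at line~220, which also emits the signal; symmetrically, line~205 guards the push thread on \textsf{pushCnt}$\,==\,1$, whose only writer toward that value is the training thread at line~110. The pull-side pair (lines 117 and 209) is handled identically via \textsf{pullCnt}, and line~305 on the PS is handled via \textsf{g\_env[idx].cnt}.

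Then, for each pair, I would argue the following two-step implication. Step one: by Lemma~\ref{lemma1}, the counter ranges over $\{0,1\}$ (or, for \textsf{g\_env[idx].cnt}, over $\{0,\dots,\textsf{sz}\}$), so ``not~\texttt{c}'' pins the counter to a specific value before the signal, and the single increment/decrement at the signaling site necessarily flips it into the value that makes \texttt{c} true. Step two: between the moment the signaler performs its update and the moment the awakened thread re-tests the loop guard, no other thread can mutate the counter back, because the writer on that side of the queue is precisely the thread that is currently blocked in the wait. Hence on the very first re-evaluation of the loop condition the guard is false and the thread exits. The $cudaStreamSynchronize()$ calls noted in the design section guarantee that the memory write making \texttt{c} true is actually visible to the waking thread, closing the only remaining loophole (stale memory as opposed to stale control flow).

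The only genuinely delicate step is ruling out an interleaving in which some third party alters the counter in the gap between signal and wake-up; this is why I would invest effort in spelling out the single-producer/single-consumer structure explicitly and in citing Lemma~\ref{lemma1} to exclude multi-step overshoots. I do not anticipate trouble with the PS side (line~305) because \textsf{g\_env[idx]} is indexed by learner, so each queue still has one producer (the push thread of learner \textsf{idx}) and one consumer (the PS); the same template applies, with the only change being that the counter can take more than two values and the relevant guard is ``queue non-empty'' rather than an equality.
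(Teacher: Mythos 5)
Your overall strategy is exactly the paper's: for each wait site, identify the unique matching signal and use Lemma~\ref{lemma1} to show that at the moment the signal is sent the loop guard has already been falsified, so the awakened thread exits the condition check loop on its first re-test (your ``step two'' about the only other writer being the blocked waiter itself is an extra precision the paper leaves implicit, and it is sound). The push-side pair you describe (line 106 signaled by the push thread after its decrement at line 220, line 205 signaled by the training thread after its increment at line 110) is correct and is precisely the worked example in the paper.

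However, your tabulation misassigns two of the five wait sites, and those two case analyses would fail as planned. Line 209 is not the pull-side counterpart of line 117: it is the \emph{push} thread waiting for space in the learner-to-PS gradient queue, i.e.\ its guard involves \textsf{g\_env[idx].cnt} reaching \textsf{g\_env[idx].sz}, and the signaler is the parameter server at line 13 after it consumes a gradient (this is the pairing the deadlock-freedom theorem later has to treat specially). Conversely, line 305 is not ``on the PS'' and does not involve \textsf{g\_env[idx]}: the parameter server never condition-waits at all --- it busy-loops when the queues are empty --- whereas line 305 is the \emph{pull} thread waiting until \textsf{pullCnt} is 0, signaled by the training thread at line 124; the true partner of line 117 (training thread waiting until \textsf{pullCnt} is 1) is the pull thread's signal at line 317. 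As written, your arguments for sites 209 and 305 would reason about the wrong shared counter and the wrong signaling thread. Once the mapping is corrected, your two-step template (Lemma~\ref{lemma1} pins the counter, the signaler's single update flips the guard, and no third party can flip it back before the waiter re-tests) goes through at all five sites and coincides with the paper's proof, with the interleaving and memory-visibility details spelled out more explicitly.
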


\begin{proof}
\vspace{-0.03in}
We will show the loop conditions do not hold when signals are sent. For example, only line 
$222$ (Fig.~\ref{fig:push_code}) can wake up the condition wait on 
\textsf{pushEmpty} at line $106$ . According to Lemma \ref{lemma1}, 
\textsf{pushCnt} can only be $0$ or $1$. When sending the signal, 
\textsf{pushCnt} is always $0$ (due to line $220$) and thus invalidates the 
loop condition at line $105$. Therefore, the training 
thread will exit the condition check loop. Similarly, we can prove the claim is 
true for other condition waits.
\end{proof}

\begin{lemma} \label{lemma_wait}
 The wait-for graph formed by condition waits only, i.e. 
\textsf{pthread\_cond\_wait}, is acyclic and thus deadlock-free. \end{lemma}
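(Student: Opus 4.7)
The plan is to build the wait-for graph (WFG) explicitly from the five condition waits enumerated in the statement (lines 106, 117, 205, 209, and 305 of Fig.~\ref{fig:thread_code}) and then verify by inspection that it contains no directed cycle. A node of the WFG is one of the four thread classes $\{\text{PS}, \text{Training}, \text{Push}, \text{Pull}\}$ of a single learner, and a directed edge $A \to B$ is added whenever thread $A$ is blocked on a condition variable that only thread $B$ can signal. Since the whole graph has at most four nodes and a small number of edges, acyclicity can be settled by a case analysis once the edges are correctly identified.

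First I would tabulate, for each of the five waits, the predicate that keeps the thread blocked and the line(s) in the code that can falsify that predicate and send the accompanying signal. This gives edges such as $\text{Training} \to \text{Push}$ (training blocks on \textsf{pushEmpty} at line 106, which only the push thread can signal at line 222), $\text{Push} \to \text{Training}$ (push blocks on \textsf{pushFull} at line 205, signaled by training at line 107), and the analogous pair $\text{Training} \leftrightarrow \text{Pull}$ for the pull-side waits at lines 117 and 305. The remaining wait at line 209 is on the queue-not-full predicate \textsf{g\_env[idx].cnt} $<$ \textsf{g\_env[idx].sz}, which is only decreased by the PS thread, giving $\text{Push} \to \text{PS}$. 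Because the PS thread busy-loops instead of performing a \textsf{pthread\_cond\_wait}, it has no outgoing WFG edge, so PS is a sink in the graph built from condition waits alone.

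The crux of the argument is that the two edges $\text{Training} \to \text{Push}$ and $\text{Push} \to \text{Training}$, which syntactically close a 2-cycle, can never be present simultaneously in any reachable state, and similarly for the Training--Pull pair. Here Lemma~\ref{lemma1} does the real work: \textsf{pushCnt} $\in \{0,1\}$, training can only be blocked on \textsf{pushEmpty} when \textsf{pushCnt} $= 1$, and push can only be blocked on \textsf{pushFull} when \textsf{pushCnt} $= 0$. Since these two conditions are mutually exclusive, at most one of the two edges is present at any instant, so no 2-cycle between training and push can actually materialize. Repeating this argument with \textsf{pullCnt} rules out a 2-cycle between training and pull.

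Once the 2-cycles are excluded, I would rule out longer cycles by the sink property: the only remaining non-trivial edges are $\text{Push} \to \text{Training}$, $\text{Training} \to \text{Push}$, $\text{Training} \to \text{Pull}$, $\text{Pull} \to \text{Training}$, and $\text{Push} \to \text{PS}$; any putative longer cycle must avoid the sink PS and therefore stays inside $\{\text{Training},\text{Push},\text{Pull}\}$, but every edge leaving Push or Pull goes to Training, and the only edges out of Training go back to Push or Pull, so any 3-cycle would have to re-use an edge between Training and Push (or Training and Pull), which is exactly what the mutual-exclusion argument above forbids. The main obstacle, as I see it, is being fully precise about the signalling pairs at lines 117/305 and lines 205/209 --- in particular distinguishing the two distinct waits inside the push thread (one on \textsf{pushCnt}, one on the shared queue count) and confirming from the code that no wait has more than one possible signaller --- since a cycle-free WFG relies entirely on this edge inventory being correct.
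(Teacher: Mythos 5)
Your proposal is correct and follows essentially the same route as the paper: enumerate the wait--signal edges of the wait-for graph, observe that the only potential cycles are the two 2-cycles between the training thread and the push/pull threads, and invoke Lemma~\ref{lemma1}'s $\{0,1\}$ invariant on \textsf{pushCnt} and \textsf{pullCnt} to show the two edges of each cycle are mutually exclusive and hence cannot coexist. Your explicit treatment of the line-209 wait as an edge into the (sink) parameter-server thread is just a spelled-out version of what the paper's Fig.~\ref{fig:wfg} encodes graphically.
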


\begin{proof}
\vspace{-0.03in}
  The directed edges in Fig.~\ref{fig:wfg} represent the wait-for relations among 
  wait and signal statements. The edges in blue and green form two cycles. We will show some edges in a 
  cycle cannot exist at the same time. 
  
  In the blue cycle, the edge from line $205$-$2$ to $111$ represents that the 
push thread waits until \textsf{pushCnt != 0}. The blue edge from $106$-$2$ to 
$221$ indicates the training thread waits until \textsf{pushCnt != 1}.  Given 
that \textsf{pushCnt} can only be  $0$ or $1$ (Lemma \ref{lemma1}), the above 
two conditions cannot be true at the same time. Therefore, these two edges 
cannot exist together and the cycle in blue is infeasible. In the green cycle, 
the edge from $305$-$2$ to $124$ illustrates the pull thread waits until 
\textsf{pullCnt = 0}.  The edge from $117$-$2$ to $317$ indicates the training 
thread waits until \textsf{pullCnt = 1}. Similarly, this cycle is infeasible 
too.
\end{proof}

\begin{lemma} \label{lemma_locks}
Mutex lock operations in Fig.~\ref{fig:thread_code} are deadlock-free.
\end{lemma}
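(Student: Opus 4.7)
The plan is to prove Lemma~\ref{lemma_locks} by showing that the wait-for graph induced \emph{solely} by mutex acquisitions in Fig.~\ref{fig:thread_code} contains no cycle, so by the classical Coffman deadlock conditions no deadlock involving only mutexes can occur. This complements Lemma~\ref{lemma_wait}, which handled the condition-variable wait graph; a combined argument in a later remark would then cover both synchronization primitives together.

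First I would enumerate every \textsf{pthread\_mutex\_lock}/\textsf{pthread\_mutex\_unlock} pair in the four code listings (parameter server, training thread, push thread, pull thread) and classify them by the mutex they operate on. Inspection of Fig.~\ref{fig:thread_code} shows two distinct mutex families: the per-learner queue mutex \textsf{g\_env[idx].mtx} (guarding \textsf{g\_env[idx].cnt} and the associated slot) and the per-thread synchronization mutexes paired with the \textsf{pushEmpty}/\textsf{pushFull} and \textsf{pullEmpty}/\textsf{pullFull} condition variables on each learner. Each listing acquires at most one such mutex before releasing it and proceeding, i.e.\ there is no nested \textsf{lock}--\textsf{lock} region in any thread between a \textsf{lock} and its matching \textsf{unlock}.

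Second, I would formalize this observation as the key invariant: at any point in the execution of any thread, the multiset of mutexes it currently holds has cardinality at most one. Since a cycle in a wait-for graph of mutex acquisitions requires at least one thread to hold a mutex $A$ while blocking on a mutex $B \neq A$, the invariant immediately rules out any such edge, and hence any cycle. Thus the hold-and-wait condition cannot arise, and the mutex-only wait-for graph is trivially acyclic.

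The main obstacle, as in Lemma~\ref{lemma_wait}, is bookkeeping rather than mathematical depth: one must check every code path in Fig.~\ref{fig:ps_code}--\ref{fig:push_code}, including paths around the condition-wait loops (which release and re-acquire the associated mutex internally), to confirm that no second mutex is ever acquired while one is already held. I would carry this out by walking line-by-line through each of the four threads, matching each \textsf{lock} call with its \textsf{unlock} before any other \textsf{lock}, and explicitly noting that \textsf{pthread\_cond\_wait} atomically releases its mutex during the blocking interval so it does not produce a hidden nested-hold. Once this case analysis is complete, the lemma follows directly.
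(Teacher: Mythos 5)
There is a genuine gap: your key invariant --- that no thread ever holds more than one mutex, i.e.\ that there is no nested \textsf{lock}--\textsf{lock} region anywhere in Fig.~\ref{fig:thread_code} --- is false for the push thread. The paper's own proof explicitly identifies the push thread as the one thread that \emph{can} be in a hold-and-wait state (lines $207$, $209$-$3$ and $217$ in the rewritten form of Fig.~\ref{fig:wfg}): it holds \textsf{pushMtx} while acquiring or waiting in connection with the gradient-queue lock \textsf{gradMtx}. So the ``cardinality at most one'' claim on which your whole argument rests would be refuted the moment you carry out the line-by-line bookkeeping you describe, and the conclusion ``hold-and-wait cannot arise'' does not follow.

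The correct argument is weaker but sufficient: hold-and-wait does occur, but \emph{only in the push thread}. A circular wait among mutexes requires every thread on the cycle to hold one lock while waiting for another; since the parameter server, training thread and pull thread each release any lock they hold before blocking on another (their condition waits release the associated mutex, as you note), at most one participant in any would-be cycle can hold-and-wait, so no cycle can close. That is essentially the paper's proof: after equivalently rewriting the side-effect-free condition-check loops (lines $105$--$107$, $116$--$118$, $204$--$206$, $208$--$211$, $304$--$306$) into the form of Fig.~\ref{fig:wfg}, one observes that only the push thread satisfies the hold-and-wait necessary condition of Coffman, and concludes deadlock-freedom of the mutex operations. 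Your Coffman-condition framing and your observation about \textsf{pthread\_cond\_wait} atomically releasing its mutex are both on target; you only need to replace the false ``no nesting anywhere'' invariant with the ``nesting confined to a single thread'' observation.
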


\begin{proof}
\vspace{-0.03in}
Since the condition check loops (lines $105$-$107$, $116$-$118$, $204$-$206$, $208$-$211$ 
and $304$-$306$ in Fig.~\ref{fig:thread_code}) do not have side effects, we 
equivalently rewrite the condition check loops and summarize synchronization 
operations in Fig.~\ref{fig:wfg}. 

Now consider operations based on mutex locks in Fig.~\ref{fig:wfg}. One necessary conditions for deadlocks is {\it 
hold-and-wait} \cite{CoffmanCondition}. If threads are not holding one resource 
while waiting for another, there is no deadlock. However, only the 
push thread can be in a hold-and-wait state (lines $207$, $209$-$3$ and $217$). 
Therefore, mutex lock operations cannot introduce deadlocks.
\end{proof}

\begin{theorem}
\label{theorem:deadlock-free}
\Tool is dealock-free. 
\end{theorem}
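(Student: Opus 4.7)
The plan is to assemble Theorem~\ref{theorem:deadlock-free} from the four preceding lemmas by appealing to the standard Coffman characterization of deadlock: a deadlock requires mutual exclusion, hold-and-wait, no preemption, and a cycle in the wait-for graph. I would observe that \Tool uses exactly two kinds of blocking primitives, namely \textsf{pthread\_cond\_wait} on the condition variables (\textsf{pushEmpty}, \textsf{pushFull}, \textsf{pullEmpty}, \textsf{pullFull}, and the per-slot queue condition) and \textsf{pthread\_mutex\_lock} on the associated mutexes. So the proof reduces to showing that no cycle in the global wait-for graph can be built out of these primitives.

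First I would cite Lemma~\ref{lemma_wait} to rule out any cycle composed purely of condition-wait edges: the two candidate cycles in Fig.~\ref{fig:wfg} (the blue one around \textsf{pushCnt} and the green one around \textsf{pullCnt}) are both infeasible because Lemma~\ref{lemma1} pins \textsf{pushCnt} and \textsf{pullCnt} to $\{0,1\}$, so the two wait predicates on opposite edges of each cycle are mutually exclusive and cannot be simultaneously true. Next, I would cite Lemma~\ref{lemma_locks} to rule out any cycle composed purely of mutex-lock edges: the only thread that is ever in a hold-and-wait state with respect to a mutex is the push thread at lines $207$, $209$-$3$, $217$, and a single hold-and-wait site cannot close a cycle.

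The remaining case, which is the one I would treat most carefully, is a mixed cycle that alternates condition-wait edges and mutex-hold edges. To handle this I would invoke Lemma~2: every signaled condition wait is guaranteed to wake up and exit its loop, so any condition-wait edge in the wait-for graph is transient as long as the required signaling thread can eventually run. Combined with the fact that each signaling site (lines $222$, $112$, $126$, $318$, and the PS's own update at line $7$) is reached along a code path that only ever acquires a mutex in the no-hold-and-wait pattern analyzed in Lemma~\ref{lemma_locks}, every such edge is guaranteed to be eventually removed from the wait-for graph. Hence no mixed cycle can persist either.

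The main obstacle I anticipate is precisely this last step: making rigorous that the condition-wait edges really are transient in the presence of the mutex operations that guard the signaling sites. I would discharge it by showing that before each signal (e.g., line $222$ before \textsf{pthread\_cond\_signal(\&pushEmpty)}) the signaling thread first releases all mutexes it holds that any waiter could be contending on, so the woken thread can actually acquire the lock it needs to re-enter its critical section. Once this local argument is in place for each of the five wait/signal pairs, concatenating Lemmas~\ref{lemma1}--\ref{lemma_locks} yields that the global wait-for graph is always acyclic, and therefore \Tool is deadlock-free.
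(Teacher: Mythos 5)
Your overall skeleton coincides with the paper's: cite Lemma~\ref{lemma_wait} to kill cycles made only of condition waits, cite Lemma~\ref{lemma_locks} to kill cycles made only of mutex acquisitions, and then treat the mixed case. The first two steps are fine, but the mixed case is the only part of the theorem with real content beyond restating the lemmas, and there your argument has a genuine gap. Your ``transience'' claim rests on the premise that the required signaling thread can eventually run, yet that premise is exactly what a mixed deadlock would violate: the dangerous scenario is a thread that blocks in a condition wait \emph{while still holding a mutex}, while the would-be signaler blocks trying to acquire that same mutex and therefore never reaches the signal statement. Lemma~2 cannot rescue this -- it guarantees that a waiter exits its loop \emph{once signaled}, not that the signal is ever issued -- so using it to argue that wait edges are transient is circular. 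The concrete check you propose (the signaler releases the mutexes it holds before signaling, so the woken thread can reacquire its lock) addresses post-signal progress of the waiter, not pre-signal reachability of the signal site, and so would not detect the hold-across-wait hazard at all; likewise, noting that signalers acquire locks without hold-and-wait does not prevent a signaler from blocking forever on a single mutex that some waiter retains across its condition wait.

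The paper closes exactly this hole with a short structural case analysis of Fig.~\ref{fig:wfg}: four of the five condition waits ($106$-$2$, $117$-$2$, $205$-$2$, $305$-$2$) are executed while holding no mutex, so they cannot supply the ``hold'' in any mixed cycle; the only wait inside an atomic region is $209$-$2$, which holds \textsf{pushMtx}, and the signal it waits for (line $13$ in the parameter-server loop) is issued under the different lock \textsf{gradMtx}, so the signaler never needs the mutex held by the waiter and no circular wait can form. If you carried out your pair-by-pair plan you would be forced to reproduce precisely this enumeration -- which waits are performed while holding which mutexes, and which mutexes each signaler must acquire to reach its signal -- and as written that enumeration, the crux of the proof, is missing from your proposal.
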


\begin{proof}
\vspace{-0.03in} 
Lemma \ref{lemma_wait} and \ref{lemma_locks} show that neither condition waits 
nor mutex lock operations can cause deadlocks. Now we consider them 
together.
In Fig.~\ref{fig:wfg}, condition waits at $106$-$2$, $205$-$2$, $305$-$2$  and 
$117$-$2$ are not in any atomic region and thus are isolated from mutex locks. 
Hence, they cannot introduce deadlocks. The only remaining case is the condition 
wait at $209$-$2$. It is guarded by lock \textsf{pushMtx} and waits for the 
signal sent at $013$. However, line $13$ is protected by a different lock 
\textsf{gradMtx}. This doesn not satisfy the hold-and-wait condition. Therefore, 
it cannot lead to deadlocks either.
\end{proof}

\begin{theorem}
\label{theorem:liveness}
 The parameter server thread processes each gradient exactly once.
\end{theorem}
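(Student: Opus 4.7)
The plan is to establish the two directions of ``exactly once'' separately: (i) every gradient pushed by a learner is eventually consumed by the PS (at-least-once), and (ii) no gradient is consumed more than once (at-most-once). Because the previous theorem already rules out deadlocks, I can freely invoke the liveness of the condition waits and locks in Fig.~\ref{fig:thread_code} while reasoning about progress.

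The central invariant I would maintain is that, for each learner $idx$, the shared counter \textsf{g\_env[idx].cnt} equals exactly the number of slots currently holding an unconsumed gradient, and that both the push thread and the PS advance through the learner's queue slots monotonically in FIFO order (using the push thread's local slot index on the producer side and the PS's per-learner cursor on the consumer side). The first step is to identify the exact lines where \textsf{g\_env[idx].cnt} is incremented (inside the push thread, after the gradient has actually been copied into the shared slot and after the enclosing \textit{cudaStreamSynchronize()}) and decremented (inside the PS loop in Fig.~\ref{fig:ps_code}, after the apply-update step on that slot has completed). The \textit{cudaStreamSynchronize()} calls, whose role is highlighted in the paragraph preceding the verification subsection, are essential here: they guarantee that the bytes of the gradient are actually visible in the shared region \emph{before} the counter increment becomes observable, so that whenever the PS sees \textsf{cnt > 0} the slot it is about to read is fully initialized.

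For at-most-once, I would argue that the PS, on seeing \textsf{g\_env[idx].cnt > 0} at a given slot, reads that slot, applies it, then decrements \textsf{cnt} and advances its per-learner cursor modulo \textsf{sz}. Because Lemma~\ref{lemma1} bounds \textsf{cnt} from above by \textsf{sz} and because the push thread cannot overwrite a slot that still counts as occupied (it blocks when \textsf{cnt == sz}, matching the producer side of the producer--consumer queue), each slot is read by the PS at most once between successive writes, and each write corresponds to a distinct gradient. For at-least-once, I would argue that once a push thread has completed the increment for a gradient in slot $j$, the round-robin loop of the PS will eventually visit learner $idx$ again; on that visit the condition \textsf{cnt > 0} holds and the PS consumes the oldest unconsumed slot, which by the FIFO ordering is slot $j$ (or an earlier one, which will itself be consumed before $j$ in a later round). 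Here I rely on Theorem~\ref{theorem:deadlock-free} to ensure the PS never becomes permanently stuck and on the monotonic advancement of the cursors to rule out skipping.

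The main obstacle, and the part I would spend the most care on, is the memory-ordering argument that justifies treating \textsf{g\_env[idx].cnt} as a reliable guard across the CPU/GPU boundary and across distinct CUDA streams. Concretely, I need to show that the push thread's increment of \textsf{cnt} is not reordered ahead of the completion of the device-to-host transfer of the gradient, and symmetrically that the PS's decrement is not reordered ahead of the apply-update read. The argument rests on the \textit{cudaStreamSynchronize()} placements and the mutex-protected critical sections (\textsf{gradMtx}, \textsf{pushMtx}) shown in Fig.~\ref{fig:wfg}; once the happens-before edges they induce are laid out carefully, the combinatorial counting of pushes versus pops over the queue reduces to the standard bounded-buffer argument, and the theorem follows.
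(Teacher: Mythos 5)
Your proposal is correct and follows essentially the same route as the paper's proof: a bounded-buffer producer--consumer argument using the Lemma~\ref{lemma1}-style bound on \textsf{g\_env[idx].cnt}, the fact that the read pointer cannot pass the write pointer (so the PS never re-reads a slot), and the push thread blocking when the queue is full (so no unprocessed gradient is overwritten). Your explicit at-least-once direction (via Theorem~\ref{theorem:deadlock-free} and the round-robin scan) and the happens-before justification based on the \textit{cudaStreamSynchronize()} placements are elaborations the paper only gestures at, so they strengthen rather than diverge from its argument.
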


\begin{proof}
\vspace{-0.03in}
The push thread of learner \textsf{idx} shares gradients with the parameter 
server using a share array \textsf{g\_env[idx]}. Essentially, it is an 
array-based FIFO queue, where \textsf{g\_env[idx].cnt} indicates the number of 
gradients in queue. It is straightforward to see {\it neither the push thread nor 
the parameter server can access the same memory location in two consecutive 
iterations}. 

Similar to the proof for Lemma \ref{lemma1}, we can show that $0 \le g\_env[idx].cnt \le 
g\_env[idx].sz$ so that $g\_env[idx].sz$ is the max size of the queue. In 
addition, as the parameter server only processes a learner's gradients if its 
queue is not empty (lines $6$-$14$), the read pointer ($g\_env[idx].use\_ptr$) can never be ahead of 
the write pointer ($g\_env[tid].fill\_ptr$). So, {\it the parameter server cannot read an outdated gradient and use it more 
than once}. Similarly, when the queue reaches its max size, the push thread must 
wait (lines $208$-$211$). So, {\it the push thread cannot overwrite unprocessed 
gradients in the queue}.
\end{proof}

\subsection{Fault-tolerance}
\label{sec:design-ft}
In \Tool, each learner and the PS has its own address space, learners and PS communicate via \texttt{mmap}-ed memory. This approach is similar to Grace\cite{grace-oopsla}, which transforms multi-threaded program to multi-process program communicating via shared memory.  When \Tool starts, learners and PS \texttt{mmap} the same memory file, which pre-allocated gradient queues, shared weights, and thread related synchronization variables (e.g., mutexes and condition variables, both with \texttt{PTHREAD\_PROCESS\_SHARED} attributes set).  

PS periodically communicates with the watchdog process to log its progress and checkpoint the parameters. 
When $n$ learners unexpectedly die ($ n< \lambda $, $\lambda$ is the number of learners), PS continues to process gradients collected from alive learners so that failures from the dead learners are naturally isolated. Note PS is stateless in that it only needs to process a fixed number of gradients without considering which learners produced the gradients. \\
If all learners die, PS no longer makes progress, thus the watchdog process kills the PS and restarts PS and learners from the last checkpoint. If a learner dies when holding a lock, PS will hang when it tries to grab the lock. Thus the watchdog process will later detect the failure and take action. Alternatively, one may set the \texttt{robust} attribute
of pthread mutex so that when PS is grabbing the lock it can notice the failed learner and skip checking that learners' gradient queue.   
\subsection{Additional Design Decisions}

\textit{Computation optimization inside GaDei.} The major
  computation in GaDei's PS server is element-wise vector addition for
  updating the global weights. We unroll the weights update loop on the parameter server side 8
  times in the OpenMP parallel section, and found that it outperforms the
  non-unrolled version by 30\%. In addition, it outperforms the
  version in which we directly use the Streaming SIMD Extension
  (SSE) instructions.\\
\textit{Half-precision floating point operations.} Recent
  research work\cite{icml2015_gupta15, han-iclr} demonstrate that it
  is possible to train DNNs with lower precision and still obtain
  comparable model accuracy. Comparing to 32-bit single precision, the
  16-bit half-precision data format will decrease parameter server
  processing burden. The latest GPU architecture may support
  half-precision 16-bit float point operation, while general purpose
  CPUs do not. We integrated software-based 16-bit CPU floating point
  operations in GaDei. However, It slowed down the computation
  by a factor of 5, which makes any speedup impossible for
  our workloads.
  \\
  \textit{Lock-free producer-consumer queue.} A
  lock-free producer-consumer queue was considered in our design. Lock-free design
  can minimize the inter-process/thread latency, but would consume a
  CPU core in 100\% for each learner. As a result, the computation
  threads have to compete with the lock-free communication threads,
  and this will impede the computation performance
  significantly. In GaDei, when learners have to wait for the server
  to process gradients, yielding CPUs via conditional variable wait
  can save CPU cycles for PS computation threads to process gradients.


\begin{table*}[thbp]
\scriptsize
\centering
\begin{tabular}{|c|r|r|r|p{6cm}|r|r|}
\hline
 &  \textbf{Data Size} & \textbf{Label Size} & \textbf{Type} & \textbf{Description} & \textbf{Model Size} & \textbf{Network} \\ \hline
\textbf{\jewel} & 2.4k & 311  & NLC & Question answering task in insurance domain; label represents answers. & 7.69MB & CNN \\ \hline
\textbf{\welltok} & 7k & 3595  & NLC & Question answering task in online service domain; label represents answers. & 20.72MB & CNN\\ \hline
\textbf{Yelp} & 500k & 5  & NLC & Customer review classification; label represents the star the customer assigns to the business. & 98.60MB & CNN \\ \hline
\textbf{Age} & 68k & 5 & NLC & Author profiling task; the label represents the age range of the author. & 72.86MB & CNN \\ \hline
\textbf{MR} & 8.6k & 2 & NLC & Sentiment analysis of movie reviews; label represents positive/negative attitude of the audience.   &  14.27MB & CNN \\ \hline
\textbf{CIFAR} & 50k & 10 & IR & Classify images into  predefined categories. & 59.97MB & VGG \\\hline
\textbf{ImageNet} & 1280k & 1000  & IR & Classify images into  predefined categories. & 244.48MB & AlexNet  \\\hline
\end{tabular}
\caption{\small Task description and corpus statistics. NLC: natural language classification; IR: image recognition. CNN is the convolutional neural networks depicted in Figure \ref{fig:nlc_model}. AlexNet \cite{alexnet} and VGG \cite{vgg} are the de facto standard models in the field of image recognition.}
\label{tab:task_description}
\end{table*}

\section{Methodology}
\label{sec:meth}
\subsection{Software}

We use open source toolkit Torch \cite{torch} as the building block to calculate the gradients of neural nets. Torch is a scientific computing framework based on Lua/LuaJIT with both CPU and CUDA backends. 
Torch has been widely used both in industry (e.g., Facebook, Google Deepmind, and IBM) and in academic community. Researchers at Google\cite{tensorflow}, Bosch\cite{dl-framework-comparison}, and Facebook\cite{convnet-bench} have benchmarked several commonly used open-source deep learning frameworks (e.g. Caffe, Theano, Torch, and Tensorflow) and found that Torch usually outperforms other frameworks. In addition, Torch is the only framework that supports ASGD on one-node. \textbf{Thus to enable further speedup on top of Torch presents a bigger challenge for the system design.}

\subsection{Benchmark}
We use four representative NLC datasets in our evaluation. Two datasets \jewel and \welltok are in-house customer datasets. The other two datasets Age\cite{age} and Yelp\cite{yelp} are publicly available datasets. The \jewel and \welltok datasets represent the typical small datasets present in NLC workloads. Age dataset represents the medium-size dataset. Yelp dataset represents the large-size dataset. The learning rate is fixed at 0.01 across workloads, \Tool trains each NLC workload for 200 epochs. The neural network model used for the four datasets is presented in Figure \ref{fig:nlc_model}. To demonstrate that \Tool can be used as a drop-in replacement for tools that solve non-TaaS tasks, we also conducted experiments on image recognition tasks CIFAR \cite{Krizhevsky09learningmultiple} and ImageNet \cite{ILSVRC15}. We train CIFAR using VGG model and we train ImageNet using AlexNet model. We use the widely-adopted hyper-parameter setup, as described in \cite{vgg_cifar, alexnet_imagenet}, to train CIFAR and ImageNet tasks to demonstrate no accuracy loss is incurred.
 Table \ref{tab:task_description} records the task description and training data statistics.


\subsection{Hardware}
The experiments have been conducted on the Softlayer cloud\footnote{http://www.softlayer.com}.
The server is equipped with two Intel Xeon E5-2690-V3 processors. Each processor has 12 real cores, clocked at 2.66GHz per core. To enable the best possible PS CPU processing speed, we turn off SMT.  The CPU memory capacity is 128GB, with peak memory bandwidth ~40GB/s. There are two NVIDIA Tesla K80s installed on the server. Each K80 has two GPUs. Totally there are four GPUs on the server with a total ~16 TFlops. The bus interface of K80 is PCIe 3.0 x16, with a 12Gbps bi-directional bandwidth each lane.



\section{Experiment Results}
\label{sec:results}
In this section, we evaluate \Tool's ability to achieve good model accuracy and runtime performance, against widely used state-of-the-art open-source multi-GPU implementation DPT and mpiT on both commercial workloads and public workloads. Section~\ref{sec:eval-correctness} demonstrates how the model accuracy progresses w.r.t training epoch. Section~\ref{sec:eval-speedup} illustrates the runtime performance of \Tool . Section~\ref{sec:eval-compare} illustrates how \Tool achieves good speedup on challenging NLC workload, while other state-of-the-art tools cannot achieve any speedup. \Tool can achieve speedup using much smaller mini-batch size than any other tools. Section~\ref{sec:eval-ft} discusses \Tool's ability to handle fault-tolerance and GPU over-subscription.

\subsection{Convergence result}
\label{sec:eval-correctness}
\begin{figure*}[t]
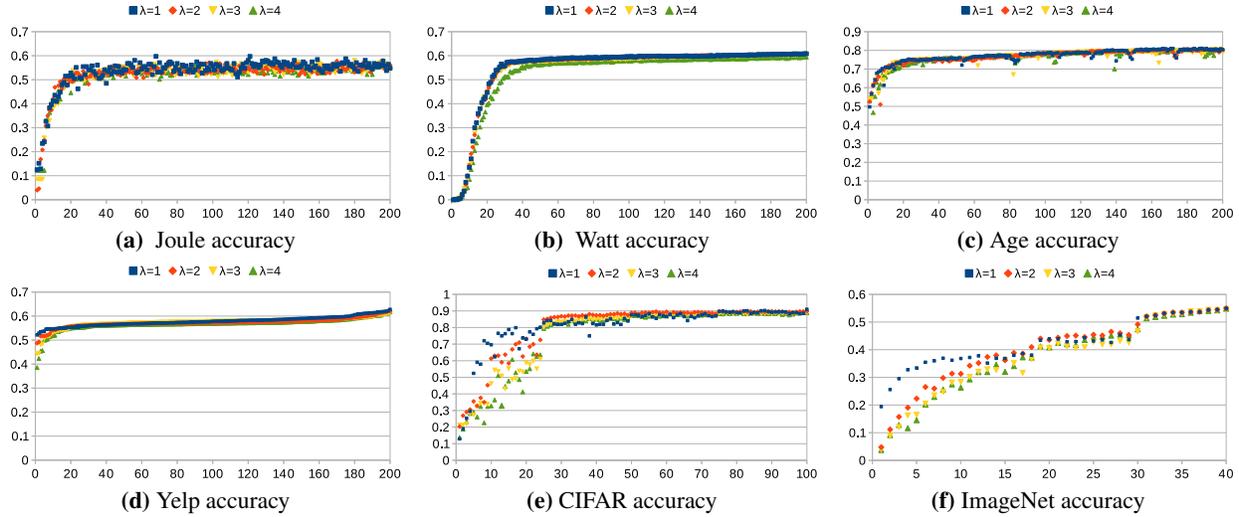

\small
    \centering
    \subfloat[{\small \jewel accuracy}]{{ \includegraphics[trim={0.2cm 0.2cm 0.2cm 0.2cm},clip,width=0.32\textwidth, scale=0.6]{./figures/correctness_jewel} }\label{fig:corr_jewel}}\hfill
    \subfloat[{\small \welltok accuracy}]{{ \includegraphics[trim={0.2cm 0.2cm 0.2cm 0.2cm},clip,width=0.32\textwidth, scale=0.7]{./figures/correctness_welltok} }\label{fig:corr_welltok}}\hfill
    \subfloat[{\small Age accuracy}]{{ \includegraphics[trim={0.2cm 0.2cm 0.2cm 0.2cm},clip,width=0.32\textwidth, scale=0.7]{./figures/correctness_age} }\label{fig:corr_age}}\\[-2ex] 
    \subfloat[{\small Yelp accuracy}]{{ \includegraphics[trim={0.2cm 0.2cm 0.2cm 0.2cm},clip,width=0.32\textwidth, scale=0.7]{./figures/correctness_yelp} }\label{fig:corr_yelp}}\hfill
        \subfloat[{\small CIFAR accuracy}]{{ \includegraphics[trim={0.2 0.2cm 0.2cm 0.2cm},clip,width=0.32\textwidth, scale=0.7]{./figures/correctness_cifar} }\label{fig:corr_cifar}}\hfill
        \subfloat[{\small ImageNet accuracy}]{{ \includegraphics[trim={0.2 0.2cm 0.2cm 0.2cm},clip,width=0.32\textwidth, scale=0.7]{./figures/correctness_imgnet} }\label{fig:corr_imgnet}}\\[-1ex]
    
\caption{\small Model accuracy w.r.t the training epochs when using 1,2,3,4 GPUs. \jewel and \welltok converge to 60\% , Yelp  62\%, Age 80\%, CIFAR  90\%, ImageNet 55\%. \Tool does not introduce accuracy loss which satisfies our expectation.}
    \label{fig:correctness_gadei}
\end{figure*}
In Figure~\ref{fig:correctness_gadei} we plot the model accuracy w.r.t the training epochs when using 1,2,3,4 learners.\jewel and\welltok converge to 60\% , Age 80\%, Yelp 62\%, CIFAR 90\%, ImageNet 55\%. Model accuracy reaches the same level ($\pm 1\%$) of accuracy as the single learner system within the same number of training epochs.
\vspace{-0.2cm}
\paragraph{Summary}
\Tool converges to the same level of accuracy as the single-learner SGD using the same number of epochs. This demonstrates a tightly-coupled system such as \Tool can mostly avoid the staleness issue introduced in a typical ASGD system, when using small minibatch size.

\subsection{Speedup Results and Memory Bandwidth Analysis}
\begin{table}[th]
\scriptsize
\centering
    \begin{tabular}{|l|r|r|r|}\hline
    \textbf{Workload} &  \textbf{Model Size}   & \textbf{Epochs}  & \textbf{Running Time}  \\ 
             &  \textbf{(MB)}         &         & \textbf{(hrs)}       \\
    \hline
    \textbf{\jewel}    &  7.69             &  200    & 0.18 \\
    \hline
    \textbf{\welltok}  & 20.72             &  200    & 0.78  \\
    \hline
    \textbf{Age}      & 72.86             &  200    & 9.44 \\
    \hline
    \textbf{Yelp}     & 98.60             &  200    & 24.99 \\
    \hline
    \textbf{Cifar}    & 59.97             &  100    & 6.53 \\
    \hline
    \textbf{ImageNet} & 244.48            &  40    & 52.56 \\ \hline
   \end{tabular}
    \caption{\small Single-learner performance baseline on one GPU.}
    \label{tab:perf-baseline}
\end{table}

\label{sec:eval-speedup}
\begin{figure*}
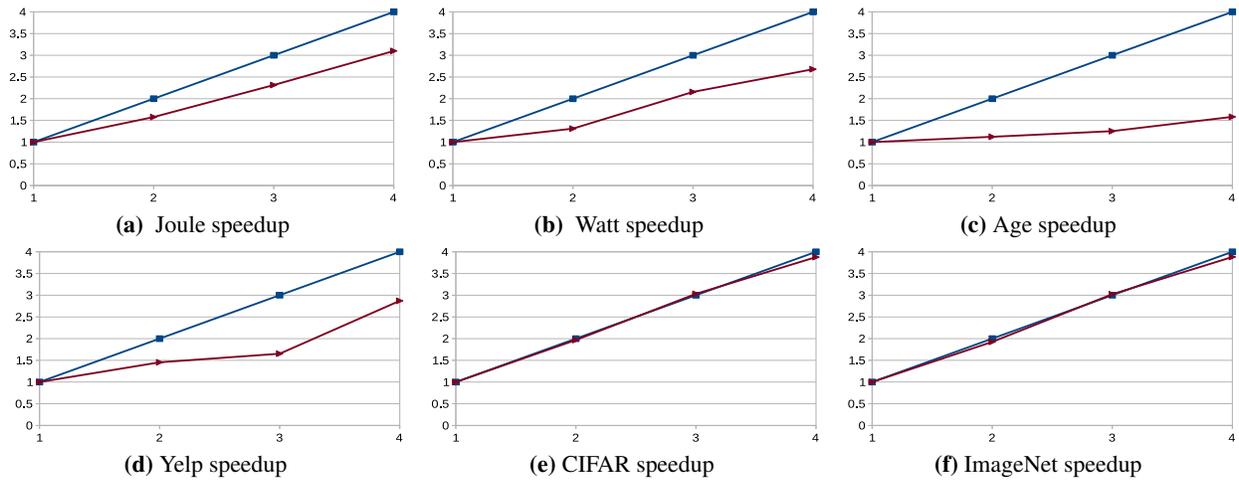

   \small
    \centering
    \subfloat[{\small \jewel speedup}]{{\centering \includegraphics[trim={0 0.2cm 0.3cm 0.3cm},clip, width=0.32\textwidth, height=2.6cm]{./figures/speedup_jewel} }\label{fig:speedup_jewel}}\hfill
    \subfloat[{\small \welltok speedup}]{{ \includegraphics[trim={0 0.2cm 0.3cm 0.3cm},clip,width=0.32\textwidth, height=2.6cm]{./figures/speedup_welltok} }\label{fig:speedup_welltok}}\hfill
    \subfloat[{\small Age speedup}]{{ \includegraphics[trim={0 0.2cm 0.3cm 0.3cm},clip,width=0.32\textwidth ,height=2.6cm]{./figures/speedup_age} }\label{fig:speedup_age}}\\[-2ex]
    \subfloat[{\small Yelp speedup}]{{ \includegraphics[trim={0 0.2cm 0.3cm 0.3cm},clip,width=0.32\textwidth ,height=2.6cm]{./figures/speedup_yelp} }\label{fig:speedup_yelp}}\hfill
    \subfloat[{\small CIFAR speedup}]{{ \includegraphics[trim={0 0.2cm 0.3cm 0.3cm},clip,width=0.32\textwidth ,height=2.6cm]{./figures/speedup_cifar} }\label{fig:speedup_cifar}}\hfill
    \subfloat[{\small ImageNet speedup}]{{ \includegraphics[trim={0 0.2cm 0.3cm 0.3cm},clip,width=0.32\textwidth, height=2.6cm]{./figures/speedup_imgnet} }\label{fig:speedup_imgnet}}\\[-1ex]
    \caption{\small Blue line is oracle linear speedup. Red line is the speedup performance of \Tool~.}
    \label{fig:speedup_gadei}
\end{figure*}
Table~\ref{tab:perf-baseline} records the single-learner performance baseline for comparison.
In Figure~\ref{fig:speedup_gadei}, the speedup performance of \Tool is plotted. When running on challenging commercial IBM Watson NLC workloads, \Tool can achieve on average 1.5X - 3X speedup when using up to 4 learners. When running on public image recognition benchmark tasks, \Tool achieves near linear speedup.
Dividing the total amount of data transferred between learners and parameter server by the total runtime, the memory bandwidth utilized by \Tool  is reported in Table~\ref{tab:bandwidth_utilized}. When running Watson workloads on 4 GPUs, \Tool sustains 36-55GB/s bandwidth, which is close to the hardware limit. 

\begin{table}[thbp]
\scriptsize
\centering
\begin{tabular}{|l|r|r|r|}
\hline
\multicolumn{1}{|l|}{}& \multicolumn{3}{c|}{\textbf{Memory Bandwidth Utilized(GB/s)}}  \\ \cline{2-4}
\textbf{Workload} & \textbf{$\lambda=2$} & \textbf{$\lambda=3$} & \textbf{$\lambda=4$} \\ \hline
\textbf{\jewel} & 18.56 & 27.25 & 36.48 \\ \hline
\textbf{\welltok} & 26.93 & 44.33 & 55.07 \\ \hline
\textbf{Age} & 32.94 & 36.76 & 46.42 \\ \hline
\textbf{Yelp} & 19.93 & 22.64 & 39.34 \\ \hline
\textbf{CIFAR} & 0.78 & 1.21 & 1.45 \\ \hline
\textbf{ImageNet} & 3.98 & 6.25 & 8.02 \\ \hline
\end{tabular}
\caption{\small Memory bandwidth utilization. $\lambda$ is the number of learners. }
\label{tab:bandwidth_utilized}
\end{table}

\vspace{-0.2cm}
\paragraph{Summary}
\Tool achieves linear speedup on public dataset and model, and achieves good speedup on challenging commercial workload. \Tool comes close to saturating the hardware memory bandwidth.

\subsection{Compare with DPT and mpiT}
\label{sec:eval-compare}
\begin{figure}
\small
\centering
\includegraphics[
keepaspectratio,scale=0.55]{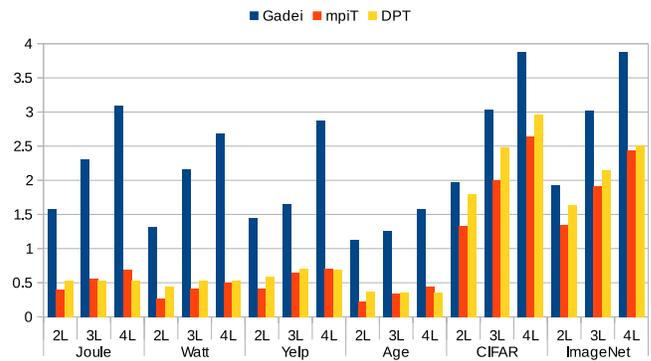} 
\caption{Speedup comparison among \Tool, mpiT and DPT. Y-axis is the speedup, X-axis represents different workload under $\lambda=2,3,4$ settings. \Tool consistently outperforms other tools.}
\label{fig:speedup_compare}
\end{figure}
We compare \Tool's speedup with that of mpiT and DPT, two state-of-the-art scale-up deep learning training frameworks. Figure~\ref{fig:speedup_compare} shows that \Tool consistently outperforms other tools. For NLC workload, DPT and mpiT actually slow down the execution. DPT has inferior performance because it is a SSGD style implementation, PS blocks all learners when updating weights, whereas \Tool implements ASGD style synchronization protocol. mpiT has inferior performance because it does not minimize memory copies (i.e., there is at least one extra copy from PS/learners to MPI runtime), and its message-passing style implementation makes HogWild! lock-free weights update impossible. 
\begin{table}[t]
\scriptsize
\centering
\begin{tabular}{|l|r|r|r|}
\hline
 \textbf{Workload}& \multicolumn{1}{c|}{\textbf{Gadei}} & \multicolumn{1}{c|}{\textbf{mpiT}} & \multicolumn{1}{c|}{\textbf{DPT}} \\ \hline
\textbf{\jewel} & 1 &16  & 32 \\ \hline\textbf{\welltok} & 1 &16  & 64 \\ \hline
\textbf{Age} & 1 & 32  & 32 \\ \hline
\textbf{Yelp} & 16 & 64 & 128 \\ \hline
\textbf{CIFAR} & 2 &  2& 8 \\ \hline
\textbf{ImageNet} & 2 & 8  & 8 \\ \hline
\end{tabular}
\caption{\small Minimum mini-batch size used by a system while still achieving speedup.}
\label{tab:stress_compare}
\end{table}
Table~\ref{tab:stress_compare} is the stress test: we decrease the batch size until there is no longer a speedup. Recall as described in Section~\ref{sec:smallbs_justification}, the smaller batch size a system can support, the higher probability a deep learning model can reach a desirable model accuracy. \Tool supports much smaller batch size, often the smallest possible size that is constrained only by the underlying model, than other tools.
\vspace{-0.2cm}
\paragraph{Summary}
\Tool significantly outperforms state-of-the-art open source implementations. It achieves good speedup on commercial workloads whereas existing open source implementations slow down the execution. On public benchmark, \Tool also outperforms existing open-source alternatives. \Tool can speedup workload using batch size that is an order of magnitude smaller than other state-of-the-art tools.

\subsection{Fault tolerance and over-subscription of learners}
\label{sec:eval-ft}
We randomly kill learners and verify \Tool can always finish training with desired model accuracy. In contrast, mpiT runs on top of MPI~\cite{mpi}, which traditionally does not provide fault-tolerance mechanism. DPT orchestrates multiple learners in the same process, thus when one learner fails, the entire process is killed. Further, nccl implements blocking collective operations, and their behavior in the presence of failure is not defined.\\
\Tool supports over-subscription of GPUs , i.e., run $\lambda$ learners over $N$ GPUs, where $\lambda > N $. Recurrent neural networks, such as long short-term memory (LSTM), are known to be difficult to efficiently parallelize  due to complicated computation dependencies. When such a model is adopted, GPUs usually operate at much lower efficiency compared to running CNNs. By oversubscribing GPUs, \Tool can fully utilize GPU resources. In another cognitive task, where LSTM is utilized, \Tool achieves 7-fold speedup when running 8 learners on 4 GPUs. We are also able to deploy 16 learners on 4 GPUs for NLC tasks to extrapolate their convergence behavior as if we had a 16-GPU server installation. 
\vspace{-0.2cm}
\paragraph{Summary}
\Tool supports fault-tolerance and GPU over-subscription. To the best of our knowledge, \Tool is the only scale-up deep learning system that supports fault-tolerance.

\section{Related work}
\label{sec:related}
Deep learning has seen tremendous success in image
recognition~\cite{krizhevsky2012imagenet}, natural language
processing~\cite{Feng2015}, speech translation~\cite{nmt:corr14} and
gaming~\cite{alphago}. The parameter server~\cite{parameterserver} based approach is the de facto method to scale
out such training tasks in a distributed environment. Several 
research works optimize PS performance and tackle fault-tolerance
problems~\cite{distbelief, adam, ps:osdi14} in a CPU-only enviroment. As GPU-based deep
learning frameworks~\cite{caffe, theano, torch} offer a much better
cost-effective solution, GPU-based scale-out PS architecture, such as
Mariana~\cite{mariana}, is optimized
for distributed GPU environment. GeePS~\cite{geeps:eurosys16}
overcomes the memory capacity limit on the GPU device by loading a part of
a model (instead of the entire model) that is necessary for
computation to GPU at a given point, and treats CPU memory as a large
data-cache. 

Different from previously published work, IBM Watson Cognitive Service
receives very different types of training data from worldwide customers and returns trained model individually. Consequently, it is imperative to set
conservative hyper-parameter configuration. This requires extremely high communication bandwidth that
renders scale-out solutions infeasible. State-of-the-art scale-up solutions, such as  nccl-based SSGD
implementation DPT\cite{dpt} and MPI-based ASGD implementation
mpiT\cite{mpiT} incur relatively large communication overhead due to intermediate data copy and execution stall. Our solution solves these issues by
minimizing the data copy and making the whole system highly concurrent.



\section{Conclusion}
\label{sec:conclusion}
In this paper, we focus on the system design challenges for emerging training-as-a-service (TaaS) workloads. By analyzing the characteristics of representative industrial workloads, we identify that to satisfy diverse customer requirements, a TaaS system needs to choose conservative hyper-parameter setup (e.g., small mini-batch size). We provide both empirical evidence and theoretical justification for such a design choice. We then characterize the communication bandwidth requirement for TaaS workloads and conclude that none of the state-of-the-art solutions can satisfy this requirement. 

We present \Tool, a scale-up deep learning framework, that maximizes communication bandwidth utilization in a tightly-coupled system. \Tool enables efficient multi-learner training for arbitrary type of neural networks (e.g., CNN, RNN). We further verify the correctness of the \Tool's communication protocol. Our evaluation results demonstrate that \Tool significantly outperforms state-of-the-art scale-up solutions on industrial workloads and public workloads, usually by an order of magnitude. In addition, \Tool provides fault-tolerance, which is missing in other scale-up solutions.

\clearpage
\newpage

\bibliographystyle{acm}
\bibliography{main}

\end{document}